\newcommand{\libsvm}{LIBSVM}
\newcommand{\liblinear}{LIBLINEAR}
\begin{document}
%
\title{Safe Sample Screening for \\ Support Vector Machines}
%
%
%
%

\author{
Kohei~Ogawa,~\IEEEmembership{}
Yoshiki~Suzuki,~\IEEEmembership{}
Shinya~Suzumura~\IEEEmembership{} 
and 
Ichiro~Takeuchi,~\IEEEmembership{Member,~IEEE}
\IEEEcompsocitemizethanks{\IEEEcompsocthanksitem
K. Ogawa, Y. Suzuki,
S. Suzumura and I. Takeuchi
are with the Department of Engineering, Nagoya Institute of Technology,
Gokiso-cho, Showa-ku, Nagoya, Japan. \protect\\
E-mail: \{ogawa, suzuki,
suzumura\}.mllab.nit@gmail.com,
and
takeuchi.ichiro@nitech.ac.jp}
\thanks{}}

%
%

\markboth{Ogawa, Suzuki, Suzumura and Takeuchi}%
{Shell \MakeLowercase{\textit{et al.}}: Bare Demo of IEEEtran.cls for Computer Society Journals}
%


\IEEEcompsoctitleabstractindextext{%
\begin{abstract}
Sparse classifiers such as the support vector machines (SVM) are efficient
in test-phases because the classifier is characterized only by a subset
of the samples called \emph{support vectors (SVs)},
and the rest of the samples (non SVs) have no influence on the
classification result.
However, 
the advantage of the sparsity has not been fully exploited 
in training phases
because it is generally difficult to know 
which sample turns out to be SV beforehand.
In this paper,
we introduce a new approach called
\emph{safe sample screening} 
that enables us to identify 
a subset of the non-SVs 
and screen them out 
prior to the training phase. 
Our approach is different from
existing heuristic approaches
in the sense that  
the screened samples are
\emph{guaranteed} to be non-SVs
at the optimal solution. 
We investigate the advantage of the safe sample screening approach
through intensive numerical experiments,
and demonstrate that
it can substantially decrease the computational cost
of the state-of-the-art SVM solvers such as {\libsvm}.
In the current \emph{big data} era, 
we believe that safe sample screening 
would be of great practical importance
since the data size can be reduced without sacrificing the optimality of the final solution.

\end{abstract}

\begin{keywords}
 Support Vector Machine, Sparse Modeling, Convex Optimization, Safe Screening, Regularization Path
\end{keywords}}

\maketitle

\IEEEdisplaynotcompsoctitleabstractindextext

%
\IEEEpeerreviewmaketitle

\section{Introduction}
\label{sec:Introduction}
\IEEEPARstart{T}he support vector machines (SVM) 
\cite{Boser92a,Cortes95a,Vapnik98a}
has been successfully applied to 
large-scale classification problems
\cite{Ma09a,Lin11a,Lin11b}.
A trained SVM classifier is
\emph{sparse}
in the sense that
the decision function is characterized only by 
a subset of the samples 
known as
\emph{support vectors (SVs)}.
One of the computational advantages of such a sparse classifier is
its efficiency
in the test phase,
where 
the classifier can be evaluated for a new test input
with the cost proportional only to the number of the SVs.
The rest of the samples (non-SVs) can be discarded \emph{after} training phases
because they have no influence on the classification results.

However, 
the advantage of the sparsity has not been fully exploited 
in the training phase 
because it is generally difficult to  know
which sample turns out to be SV beforehand.
Many existing SVM solvers
spend most of their time 
for identifying the SVs
\cite{Platt99a,Joachims99b,Hastie04a,Scheinberg06a,Chang11a}.
For example,
well-known 
{\libsvm}
\cite{Chang11a} 
first predicts which sample would be SV (prediction step),
and then 
solves a smaller optimization problem
defined only with the subset of the samples
predicted as SVs (optimization step).
These two steps must be repeated until the true SVs are identified 
because some of the samples might be mistakenly predicted as non-SVs in the prediction step. 

In this paper,
we introduce a new approach 
that can identify 
a subset of the non-SVs
and screen them out 
\emph{before}
actually solving the training optimization problem.
Our approach is different from
the prediction step in the above
{\libsvm}
or
other similar heuristic approaches
in the sense that
the screened samples
are
\emph{guaranteed} to be non-SVs
at the optimal solution. 
It means that the original optimal solution can be obtained
by solving the smaller problem defined
only with the remaining set of the non-screened samples. 
We call our approach as
\emph{safe sample screening}
because it never identifies a true SV as non-SV.
\figurename \ref{fig:toy.example}
illustrates our approach 
on a toy data set
(see \S \ref{subsec:toy.experiment} for details).

\begin{figure}[t]
\centering
\includegraphics[width = 0.35\textwidth]{./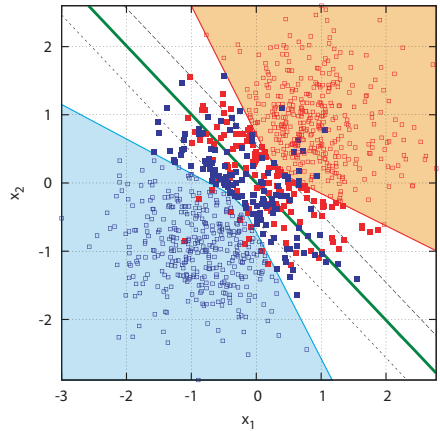}
\caption{An example of our safe sample screening method on a binary
 classification problem with a two-dimensional toy data set.
 For each of the red and blue classes, 500 samples are drawn.
 Our safe sample screening method found that all the samples in
 the shaded regions
 are guaranteed to be non-SVs.
 In this example, more than 80\% of the samples
 (\red{\scriptsize $\square$} and \blue{\scriptsize $\square$}) 
 are 
 identified as non-SVs and they can be discarded prior to the training phase.
 It means that 
 the optimal classifier (the green line)
 can be obtained by solving a much smaller optimization problem
 defined only with the remaining 20\% of the samples
 (\red{\scriptsize $\blacksquare$} and \blue{\scriptsize $\blacksquare$}).
 See \S \ref{subsec:toy.experiment} for details.
 }
\label{fig:toy.example}
\end{figure}

Safe sample screening can be used together with any SVM solvers such as
{\libsvm} 
as a preprocessing step for reducing the training set size.
%
%
In our experience,
it is often possible to screen out nearly 90\% of the samples as non-SVs.
In such cases,
the total computational cost of SVM training can be substantially reduced because 
only the remaining 10\% of the samples are fed into an SVM solver
(see \S \ref{sec:experiments}).
Furthermore,
we show that
safe sample screening
is especially useful
for model selection,
where a sequence of SVM classifiers with different regularization parameters are trained.
In the current \emph{big data} era, 
we believe that safe sample screening 
would be of great practical importance
because it enables us to reduce 
the data size
without sacrificing the optimality.

The basic idea behind safe sample screening 
is inspired by a resent study by El Ghaoui et al.
\cite{ElGhaoui12b}.
In the context of $L_1$ regularized sparse linear models,
they introduced an approach 
that can
\emph{safely}
identify
a subset of the
\emph{non-active features}
whose coefficients turn out to be zero at the optimal solution. 
This approach has been called
\emph{safe feature screening},
and various extensions have been reported 
\cite{Xiang12a,Xiang12b,Dai12a,Wang12a,Wang13c,Wang13d,Wu13a,Wang13a,Wang13b}
(see \S \ref{subsec:relation.with.feature.screening} for details).
Our contribution is to extend the idea of 
\cite{ElGhaoui12b}
for safely screening out non-SVs.
This extension is non-trivial
because the feature sparseness in a linear model 
stems from the $L_1$ penalty,
while the sample sparseness in an SVM
is originated from the large-margin principle.

This paper is an extended version of our preliminary conference paper
\cite{Ogawa13a}, 
where
we proposed a safe sample screening method 
that can be used
in somewhat more restricted situation 
than we consider here
(see Appendix
\ref{app:v.s.ICML.ver.}
for details). 
In this paper,
we extend our previous method 
in order to overcome the limitation and to improve the screening performance. 
%
%
As the best of our knowledge, 
our approach in 
\cite{Ogawa13a}
is the first safe sample screening method. 
After our conference paper was published,
Wang et al.
\cite{Wang13e}
recently proposed a new method and demonstrated that it performed better than our previous method in 
\cite{Ogawa13a}.
In this paper,
we further go beyond the Wang et al.'s method,
and show that our new method has better screening performance 
from both theoretical and empirical viewpoints
(see \S \ref{subsec:relation.with.feature.screening} for details).

The rest of the paper is organized as follows.
In \S \ref{sec:SVM},
we formulate the SVM and summarize the optimality conditions. 
Our main contribution is presented in 
\S \ref{sec:Safe.Sample.Screening.Rule} 
where we propose three safe sample screening methods for SVMs.
In \S \ref{sec:in.practice},
we describe how to use the proposed safe sample screening methods in practice.
Intensive experiments
are conducted
in \S \ref{sec:experiments},
where we investigate how much the computational cost of the state-of-the-art SVM solvers can be reduced
by using safe sample screening.
We summarize our contribution and future works 
in \S \ref{sec:Conclusion}.
Appendix contains the proofs of all the theorems and the lemmas, 
a brief description of (and comparison with) our previous method in our preliminary conference paper \cite{Ogawa13a},
the relationship between our methods and the method in \cite{Wang13e}, 
and some deitaled experimental protocols. 
The C++ and Matlab codes are available at
{\it http://www-als.ics.nitech.ac.jp/code/index.php?safe-sample-screening}.

\noindent
{\bf Notation}:
We let
$\bR$,
$\bR_+$
and
$\bR_{++}$
be the set of
real,
nonnegative 
and 
positive numbers,
respectively. 
We define 
$\bN_n \triangleq \{1, \ldots, n\}$
for any natural number $n$.
Vectors and matrices are represented
by bold face lower and upper case characters
such as
$\bm v \in \RR^n$
and 
$\bm M \in \RR^{m \times n}$,
respectively. 
An element of a vector 
$\bm v$
is written as
$v_i$
or
$(\bm v)_i$.
Similarly, 
an element of a matrix 
$\bm M$
is written as 
$M_{ij}$
or 
$(\bm M)_{ij}$.
Inequalities between two vectors such as 
$\bm v \le \bm w$
indicate component-wise inequalities:
$v_i \le w_i ~ \forall i \in \NN_n$.
Unless otherwise stated, 
we use 
$\|\cdot\|$
as a Euclidean norm. 
A vector of all 0 and 1 are denoted as 
$\zeros$ 
and 
$\ones$, 
respectively.

\section{Support vector machine}
\label{sec:SVM}
In this section we formulate the support vector machine (SVM). 
Let us consider a binary classification problem with
$n$ samples and $d$ features. 
We denote the training set as 
$\{(\bm x_i,y_i)\}_{i\in\bN_n}$
where 
$\bm x_i \in \cX \subseteq \bR^d$
and 
$y_i \in \{-1, +1\}$.
%
We consider a linear model in a feature space
$\cF$
in the following form: 
\begin{eqnarray*}
 f(\bm x) = \bm w^\top \Phi(\bm x_i),
\end{eqnarray*}
where 
$\Phi: \cX \rightarrow \cF$ 
is a map from the input space 
$\cX$ 
to the feature space 
$\cF$, 
and 
$\bm w \in \cF$ 
is a vector of the coefficients\footnote{
The bias term can be augmented to
$\bm w$ and $\Phi(\bm x)$
as an additional dimension.
}.
We sometimes write 
$f(\bm{x})$
as
$f(\bm x; \bm w)$
for explicitly specifying the associated parameter $\bm w$. 
The optimal parameter 
$\bm w^*$ 
is obtained by solving 
\begin{eqnarray}
\label{eq:p.SVM.prob.}
\bm w^*
\triangleq
\arg \min_{\bm w \in \cF}
~
\frac{1}{2} \| \bm w \|^2
+ C \sum_{i \in \bN_n} \max\{0, 1 - y_i f(\bm x_i)\},
\end{eqnarray}
where 
$C \in \RR_{++}$ 
is the regularization parameter.
The loss function 
$\max\{0, 1 - y_i f(\bm x_i)\}$
is known as {\it hinge-loss}.
We use a notation such as 
$\bm w_{[C]}^*$
when we emphasize that
it is the optimal solution of the problem
\eq{eq:p.SVM.prob.}
associated with the regularization parameter $C$.

The dual problem of
\eq{eq:p.SVM.prob.}
is formulated with the Lagrange multipliers
$\bm \alpha \in \RR_+^n$
as
\begin{eqnarray}
\label{eq:d.SVM.prob.}
\bm \alpha_{[C]}^* 
\triangleq
\arg
\max_{\bm \alpha}
~
\bigl(
\cD(\bm \alpha)
\triangleq
-\frac{1}{2} \sum_{i,j \in \bN_n} 
\alpha_i \alpha_j Q_{ij}
+ 
\sum_{i \in \NN_n} \alpha_i
\bigr)
~~~
\text{s.t.}
~
0 \le \alpha_i \le C, i \in \NN_n,
\end{eqnarray}
where 
$\bm Q \in \bR^{n \times n}$ 
is an
$n \times n$ 
matrix 
defined as 
$Q_{ij} \triangleq y_i y_j K(\bm x_i, \bm x_j)$ 
and
$K(\bm x_i, \bm x_j) \triangleq \Phi(\bm x_i)^\top \Phi(\bm x_j)$
is the \emph{Mercer kernel function}
defined by the feature map $\Phi$. 

Using the dual variables, 
the model 
$f$ 
is written
as
%
\begin{eqnarray}
\label{eq:d.model}
f(\bm x) = \sum_{i\in\bN_n} \alpha_i y_i K(\bm x_i, \bm x).
\end{eqnarray}

Denoting the optimal dual variables as
$\{\alpha^*_{[C]i}\}_{i \in \NN_n}$,
the optimality conditions
of the SVM
are summarized as 
\begin{eqnarray}
\label{eq:SVM.opt.}
i \in \cR \Rightarrow \alpha_{[C]i}^* = 0, 
~~~
i \in \cE \Rightarrow \alpha_{[C]i}^* \in [0,C], 
~~~
i \in \cL \Rightarrow \alpha_{[C]i}^* = C, 
\end{eqnarray}
where we define the three index sets:
\begin{eqnarray*}
\cR \triangleq \{ i \in \bN_n~|~ y_i f(\bm x_i) > 1 \}, 
~~~
\cE \triangleq \{ i \in \bN_n~|~ y_i f(\bm x_i) = 1 \}, 
~~~
\cL \triangleq \{ i \in \bN_n~|~ y_i f(\bm x_i) < 1 \}.  
\end{eqnarray*}

The optimality conditions
\eq{eq:SVM.opt.}
suggest that,
if it is known {\it a priori}
which samples turn out to be the members of $\cR$ at the optimal solution, 
those samples can be discarded before actually solving the training optimization problem
because the corresponding $\alpha^*_{[C]i} = 0$ indicates that they have no influence on the solution.
Similarly,
if some of the samples are known {\it a priori} to be the members of $\cL$
at the optimal solution, 
the corresponding variable can be fixed as $\alpha^*_{[C]i} = C$.
If 
we let 
$\cR^\prime$
and
$\cL^\prime$
be the subset of the samples known as the members of 
$\cR$
and 
$\cL$,
respectively, 
one could first compute
$d_i \triangleq C \sum_{j \in \cL^\prime} y_j K(\bm x_i, \bm x_j)$
for all
$i \in \NN_n \setminus (\cR^\prime \cup \cL^\prime)$,
and put them in a cache.
Then, 
it is suffice to solve the following smaller optimization problem defined only with
the remaining subset of the samples 
and the cached variables\footnote{
Note that the samples in
$\cL^\prime$
are needed in the future test phase.
Here, 
we only mentioned that
the samples in
$\cR^\prime$
and 
$\cL^\prime$
are not used during the training phase. 
}:
\begin{eqnarray*}
\max_{\bm \alpha}
\sum_{i,j \in \bN_n \setminus (\cR^\prime \cup \cL^\prime)} \!\!\!\!\! \alpha_i \alpha_j Q_{ij}
- \sum_{i \in \bN_n \setminus (\cR^\prime \cup \cL^\prime) } \!\!\!\!\! \alpha_i (1 - d_i)
~~~{\rm s.t.}~ 0 \le \alpha_i \le C,~i \in \bN_n \setminus (\cR^\prime \cup \cL^\prime). 
\end{eqnarray*}

%
%

Hereafter, 
the training samples in $\cE$ are called \emph{support vectors (SVs)},
while those in 
$\cR$ and $\cL$
are called
\emph{non-support vectors (non-SVs)}.
Note that
\emph{support vectors}
usually indicate the samples both in $\cE$ and $\cL$
in the machine learning literature
(we also use the term SVs in this sense in the previous section).
We adopt the above uncommon terminology
because the samples in
$\cR$ and $\cL$
can be treated almost in an equal manner in the rest of this paper. 
In the next section,
we develop three types of testing procedures for screening out a subset of the non-SVs. 
Each of these tests are conducted by evaluating a simple rule for each sample.
We call these testing procedures as 
\emph{safe sample screening tests} 
and
the associated rules as 
\emph{safe sample screening rules}.
%
%

\section{Safe Sample Screening for SVMs}
\label{sec:Safe.Sample.Screening.Rule}
In this section, we present our safe sample screening approach for SVMs.
\subsection{Basic idea}
\label{subsec:Basic.idea}
Let us consider a situation that we have a region 
$\Theta_{[C]} \subset \cF$ 
in the solution space,
where we only know that
the optimal solution
$\bm w^*_{[C]}$
is somewhere in this region
$\Theta_{[C]}$,
but 
$\bm w^*_{[C]}$
itself is unknown. 
In this case, 
the optimality conditions
\eq{eq:SVM.opt.}
indicate that 
\begin{eqnarray}
\label{eq:basic.test.l}
&&
\bm{w}^*_{[C]} \in \Theta_{[C]}
~\wedge~
\min_{\bm{w} \in \Theta_{[C]}} y_i f(\bm{x}_i ; \bm{w}) > 1
~ \Rightarrow ~ 
y_i f(\bm{x}_i ; \bm{w}^*_{[C]}) > 1
~ \Rightarrow ~
\alpha_{[C]i}^* = 0.
\\
\label{eq:basic.test.u}
&&
\bm{w}^*_{[C]} \in \Theta_{[C]}
~\wedge~
\max_{\bm{w} \in \Theta_{[C]}} y_i f(\bm{x}_i ; \bm{w}) < 1
~ \Rightarrow ~ 
y_i f(\bm{x}_i ; \bm{w}^*_{[C]}) < 1
~ \Rightarrow ~
\alpha_{[C]i}^* = C.
\end{eqnarray}
These facts imply that,
even if the optimal 
$\bm w^*_{[C]}$
itself is unknown, 
we might have a chance to screen out a subset of the samples in $\cR$ or $\cL$.

Based on the above idea,
we construct safe sample screening rules in the following way:
\begin{description}
 \item [(Step 1)]~~~
     we construct a region
      $\Theta_{[C]}$
      such that 
      \begin{eqnarray}
       \label{eq:basic.theta}
       \bm w^*_{[C]} \in \Theta_{[C]} \subset \cF. 
      \end{eqnarray}
 \item [(Step 2)]~~~
       we compute the lower and the upper bounds:
       \begin{eqnarray}
        \label{eq:basic.lower.upper.bounds}
	 \ell_{[C]i}
	 \triangleq
	 \min_{\bm{w} \in \Theta_{[C]}} y_i f(\bm{x}_i ; \bm{w}), 
	 ~~~
	 u_{[C]i}
	 \triangleq
	 \max_{\bm{w} \in \Theta_{[C]}} y_i f(\bm{x}_i ; \bm{w})
	 ~~~
	 \forall i \in \bN_n.
       \end{eqnarray}	    
\end{description}
Then,
the safe sample screening rules are written as 
  \begin{eqnarray}
 \label{eq:basic.safe.sample.screening.rule}
    \ell_{[C]i} > 1
    ~\Rightarrow~
    i \in \cR
    ~\Rightarrow~
    \alpha^*_{[C]i} = 0,
    ~~~
    u_{[C]i} < 1
    ~\Rightarrow~
    i \in \cL
    ~\Rightarrow~
    \alpha^*_{[C]i} = C.
  \end{eqnarray}
%

In section \ref{subsec:Ball.Test},
we first study so-called
\emph{Ball Test}
where the region
$\Theta_{[C]}$
is a closed ball in the solution space. 
In this case, 
the lower and the upper bounds can be obtained in closed forms.
In section \ref{subsec:Ball.tests.for.SVM},
we describe how to construct such a ball
$\Theta_{[C]}$
for SVMs, 
and introduce two types of balls
$\Theta_{[C]}^{\rm (BT1)}$
and 
$\Theta_{[C]}^{\rm (BT2)}$.
We call the corresponding tests as
\emph{Ball Test 1 (BT1)}
and
\emph{Ball Test 2 (BT2)},
respectively.
In section 
\ref{subsec:Intersection.Test},
we combine these two balls and develop so-called
\emph{Intersection Test (IT)},
which is shown to be more powerful (more samples can be screened out) than BT1 and BT2.

%
%
\subsection{Ball Test}
\label{subsec:Ball.Test}
When
$\Theta_{[C]}$
is a closed ball,
the lower or the upper bounds of
$y_i f(\bm x_i)$
can be obtained by minimizing a linear objective subject to a single quadratic constraint. 
We can easily show that  the solution of this class of optimization problems is given in a closed form \cite{Boyd04a}.

%
%
\begin{it}
\begin{lemm}[Ball Test]
\label{lemm:Ball.test}
Let 
 $\Theta_{[C]} \subset \cF$
 be a ball with the center 
 $\bm m \in \cF$ 
 and the radius 
 $r \in \bR_+$,
 i.e.,
$\Theta_{[C]} \triangleq \{\bm w \in \cF ~|~ \| \bm w - \bm m \| \le r\}$.
 Then,
 the lower and the upper bounds in
 \eq{eq:basic.lower.upper.bounds}
 are written as 
 \begin{eqnarray}
 \label{eq:ball.test.bounds}
 \ell_{[C]i} \equiv \min_{ \bm w \in \Theta_{[C]} } y_i f(\bm x_i; \bm w)
 =
 \bm z_i^\top \bm m - r \| \bm z_i \|, 
 ~~~
 u_{[C]i} \equiv \max_{ \bm w \in \Theta_{[C]} } y_i f(\bm x_i; \bm w)
 =
 \bm z_i^\top \bm m + r \| \bm z_i \|,
 \end{eqnarray}
 where
 we define
 $\bm z_i \triangleq y_i \Phi(\bm x_i)$, 
 $i \in \NN_n$,
 for notational simplicity.
\end{lemm}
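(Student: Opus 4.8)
The plan is to solve the two optimization problems in \eqref{eq:basic.lower.upper.bounds} explicitly. Using the dual representation $f(\bm x_i; \bm w) = \bm w^\top \Phi(\bm x_i)$ together with the definition $\bm z_i \triangleq y_i \Phi(\bm x_i)$, we have $y_i f(\bm x_i; \bm w) = y_i \bm w^\top \Phi(\bm x_i) = \bm w^\top \bm z_i$. Hence the lower bound problem becomes $\ell_{[C]i} = \min_{\bm w} \bm w^\top \bm z_i$ subject to $\|\bm w - \bm m\| \le r$, which is a linear objective minimized over a closed ball — exactly the class of problems referenced just before the lemma.

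First I would perform the change of variables $\bm v \triangleq \bm w - \bm m$, so that the constraint becomes $\|\bm v\| \le r$ and the objective becomes $(\bm m + \bm v)^\top \bm z_i = \bm m^\top \bm z_i + \bm v^\top \bm z_i$. The constant term $\bm m^\top \bm z_i = \bm z_i^\top \bm m$ factors out, and we are left with minimizing $\bm v^\top \bm z_i$ over $\|\bm v\| \le r$. By the Cauchy--Schwarz inequality, $\bm v^\top \bm z_i \ge -\|\bm v\| \, \|\bm z_i\| \ge -r \|\bm z_i\|$, and this lower bound is attained by the feasible choice $\bm v = -r \bm z_i / \|\bm z_i\|$ (taking $\bm v = \zeros$ when $\bm z_i = \zeros$, in which case the objective is identically zero and the formula still holds). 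Therefore $\ell_{[C]i} = \bm z_i^\top \bm m - r \|\bm z_i\|$. The upper bound is entirely symmetric: maximizing $\bm v^\top \bm z_i$ over $\|\bm v\| \le r$ gives $r \|\bm z_i\|$ via Cauchy--Schwarz with equality at $\bm v = r \bm z_i / \|\bm z_i\|$, yielding $u_{[C]i} = \bm z_i^\top \bm m + r \|\bm z_i\|$.

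There is essentially no serious obstacle here; the argument is a textbook application of Cauchy--Schwarz (or, equivalently, a trivial case of the closed-form solution for a linear program over an ellipsoid, which the paper already cites \cite{Boyd04a}). The only minor point requiring a word of care is the degenerate case $\bm z_i = \zeros$ (equivalently $\Phi(\bm x_i) = \zeros$), where the minimizing/maximizing direction is undefined; one checks directly that both bounds collapse to $\bm z_i^\top \bm m = 0$, consistent with the stated formulas. One should also note that these problems are well posed because $\Theta_{[C]}$ is compact and the objective is continuous, so the extrema are attained. I would write the lower-bound case in full and remark that the upper-bound case follows by replacing $\bm w$ with the reflection argument or simply by noting $u_{[C]i} = -\min_{\bm w \in \Theta_{[C]}}(-\bm z_i^\top \bm w + \text{const})$, avoiding a duplicated computation.
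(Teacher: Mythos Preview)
Your proof is correct but takes a different route from the paper's. The paper proceeds via Lagrangian duality: it forms $L(\bm w,\mu) = \bm z_i^\top \bm w + \mu(\|\bm w - \bm m\|^2 - r^2)$, minimizes over $\bm w$ in closed form, then maximizes the resulting dual function $L(\mu) = -\mu r^2 - \|\bm z_i\|^2/(4\mu) + \bm z_i^\top \bm m$ by solving $\partial L/\partial \mu = 0$ to obtain $\mu = \|\bm z_i\|/(2r)$ and hence the stated bound. Your approach---translating to the origin via $\bm v = \bm w - \bm m$ and invoking Cauchy--Schwarz with an explicit extremizer---is more elementary and sidesteps the Lagrangian machinery entirely; it also handles the degenerate case $\bm z_i = \zeros$ cleanly, which the paper's derivation glosses over (their expression for $\mu$ is undefined there). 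The paper's route, on the other hand, is consistent with the Lagrangian framework it later relies on for the more involved Intersection Test (Theorem~\ref{theo:IT}), so there is some expository value in warming up with duality here. Either argument is fully adequate for this lemma.
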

\end{it}

\noindent
The proof is presented in Appendix \ref{app:proofs}. 
The geometric interpretation of Lemma 
\ref{lemm:Ball.test} 
is shown in 
\figurename
\ref{fig:Ball.test}.

\begin{figure}[t]
\begin{center}
\begin{tabular}{cc}
 \includegraphics[width = 0.225\textwidth]{./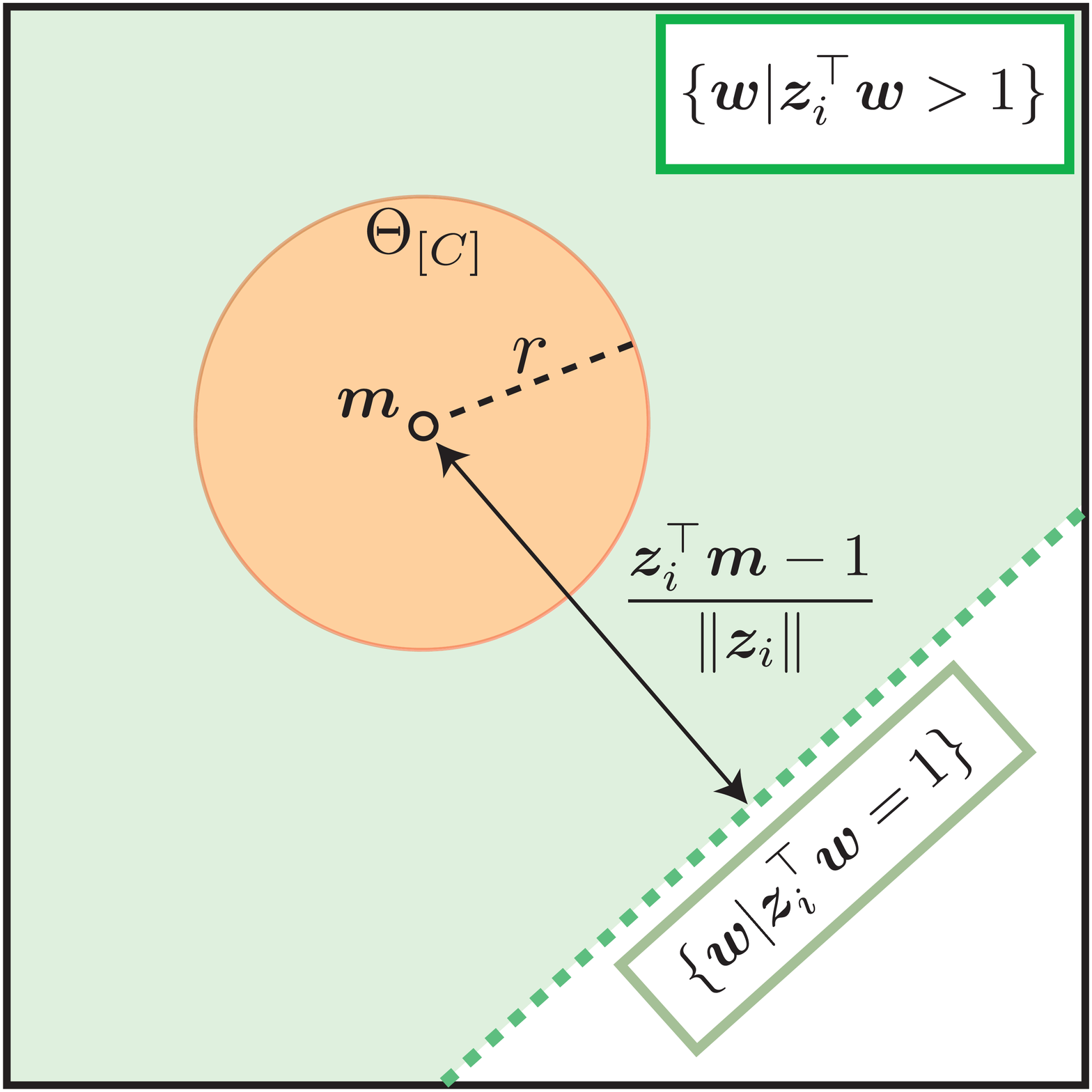} &
 \includegraphics[width = 0.225\textwidth]{./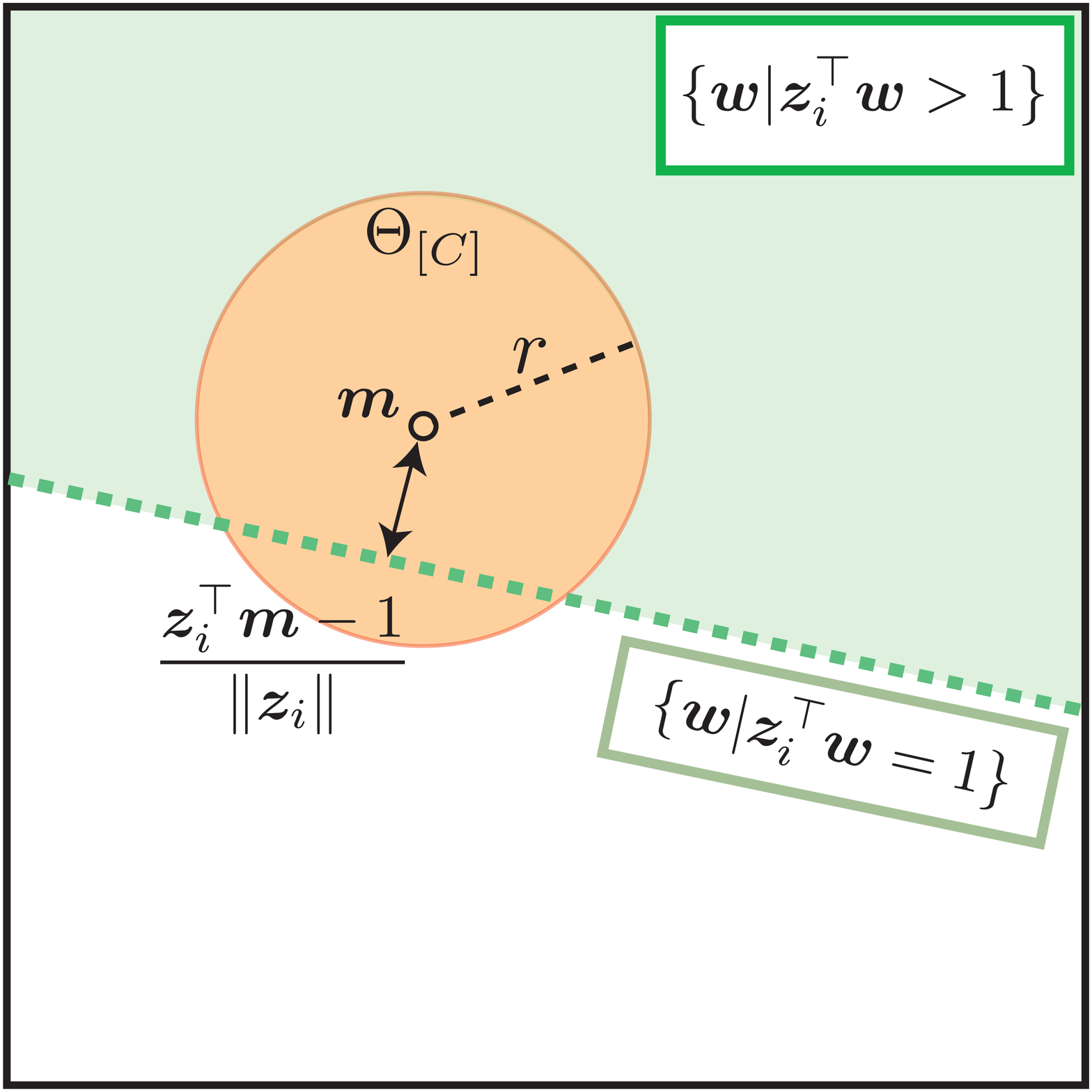} \\
 (a) Success &
 (b) Fail
\end{tabular}
 \caption{
 A geometric interpretation of ball tests.
 Two panels illustrate the solution space
 when the $i^{\rm th}$ sample
 (a) can be screened out,
 and 
 (b) cannot be screened out,
 respectively. 
 In both panels,
 the dotted green line indicates the hyperplane 
 $y_i f(\bm x_i ; \bm w) \equiv \bm z_i^\top \bm w = 1$,
 and the green region represents
 $\{\bm w | \bm z_i^\top \bm w > 1\}$.
 The orange circle
 with the center $\bm m$ and the radius $r$
 is the ball region
 $\Theta_{[C]}$
 in which the optimal solution
 $\bm w^*_{[C]}$
 exists.
 In (a),
 the fact that 
 the hyperplane 
 $\bm z_i^\top \bm w = 1$
 does not intersect with
 $\Theta_{[C]}$,
 i.e., 
 the distance
 $(\bm z_i^\top \bm m - 1)/|| \bm z_i ||$
 is larger than the radius
 $r$, 
 implies that
 $y_i f(\bm x_i ; \bm w^*_{[C]}) > 1$
 wherever the optimal solution
 $\bm w^*_{[C]}$
 locates within the region $\Theta_{[C]}$,
 and the $i^{\rm th}$ sample can be screened out as a member of $\cR$. 
 On the other hand,
 in (b),
 the hyperplane 
 $\bm z_i^\top \bm w = 1$
 intersects with
 $\Theta_{[C]}$,
 meaning that 
 we do not know whether 
 $y_i f(\bm x_i ; \bm w^*_{[C]}) > 1$
 or not
 until we actually solve the optimization problem and obtain the optimal solution
 $\bm w^*_{[C]}$.
 }
 \label{fig:Ball.test}
\end{center}
\end{figure}

%
%
\subsection{Ball Tests for SVMs}
\label{subsec:Ball.tests.for.SVM}
The following problem is shown to be equivalent to 
\eq{eq:p.SVM.prob.}
in the sense that 
$\bm w^*_{[C]}$
is the optimal solution of the original SVM problem 
\eq{eq:p.SVM.prob.}\footnote{
Similar problem has been studied in the context of structural SVM
\cite{Joachims05,Joachims06},
and the proof of the equivalence can be easily shown by using the technique described there. 
}:
 \begin{eqnarray}
 \label{eq:str.SVM}
  (\bm w^*_{[C]}, \xi^*_{[C]})
  \triangleq
  \arg
  \!\!\!\!\!
  \min_{\bm w \in \cF, \xi \in \bR}
  \!\!\!
  \cP_{[C]}(\bm w, \xi)
  \triangleq
  \frac{1}{2} \|\bm w\|^2 + C \xi 
  ~{\rm s.t.}~
  \xi \ge \sum_{i \in \bN_n} s_i (1-y_i f(\bm x_i)) 
  ~
  \forall \bm s \in \{0,1\}^n.
  ~
 \end{eqnarray}
We call the solution space of 
\eq{eq:str.SVM}
as
\emph{expanded solution space}.
In the expanded solution space, 
a quadratic function is minimized
over a polyhedron
composed of $2^n$ closed half spaces.

In the following lemma,
we consider a specific type of regions in the expanded solution space.
%
By projecting the region onto the original solution space,
we have a ball region in the form of Lemma \ref{lemm:Ball.test}.

%
%
\begin{it}
\begin{lemm}
\label{lemm:structural.svm.to.ball}
 Consider a region in the following form:
 \begin{eqnarray}
 \label{eq:tilde.theta}
 \Theta^\prime_{[C]}
 \triangleq
 \Big\{
 (\bm w, \xi) \in \cF \times \RR
 ~\Big|~
 a_1 \| \bm w\|^2 + \bm b_1^\top \bm w + c_1 + \xi \le 0,
 ~
 \bm b_2^\top \bm w + c_2  \le \xi
 \Big\},
 \end{eqnarray}
 where
 $a_1 \in \RR_{++}$,
 $\bm b_1, \bm b_2 \in \cF$, 
 $c_1, c_2 \in \RR$.
 If 
 $\Theta^\prime_{[C]}$
 is non-empty\footnote{
 $\Theta^\prime_{[C]}$
 is non-empty iff
 $\| \bm b_1 + \bm b_2\|^2 - 4 a_1 (c_1 + c_2) \ge 0$.
 }
 and 
 $(\bm w, \xi) \in \Theta^\prime_{[C]}$,
 $\bm w$
 is in a ball
 $\Theta_{[C]}$
 with the center
 $\bm m \in \cF$
 and the radius
 $r \in \cR_{+}$ 
 defined as 
\begin{eqnarray*}
\bm m
\triangleq
- \frac{1}{2 a_1} (\bm b_1 + \bm b_2), 
~
r
\triangleq
\sqrt{
\| \bm m \|^2
-
\frac{1}{a_1}
(c_1 + c_2).
}
\end{eqnarray*}
\end{lemm}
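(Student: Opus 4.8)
The plan is to eliminate the auxiliary variable $\xi$ from the two defining inequalities of $\Theta^\prime_{[C]}$ and thereby reduce the description of the admissible $\bm w$ to a single quadratic inequality, which will turn out to be exactly the ball claimed in the statement. Concretely, take any $(\bm w, \xi) \in \Theta^\prime_{[C]}$. The first constraint gives $\xi \le -a_1 \| \bm w \|^2 - \bm b_1^\top \bm w - c_1$, while the second gives $\xi \ge \bm b_2^\top \bm w + c_2$. Chaining these two bounds through $\xi$ yields
\[
a_1 \| \bm w \|^2 + (\bm b_1 + \bm b_2)^\top \bm w + (c_1 + c_2) \le 0 .
\]

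Next I would divide by $a_1 > 0$ and complete the square. Writing $\bm m \triangleq -\tfrac{1}{2 a_1}(\bm b_1 + \bm b_2)$, we have $\tfrac{1}{a_1}(\bm b_1 + \bm b_2)^\top \bm w = -2\, \bm m^\top \bm w$, so the displayed inequality becomes $\| \bm w \|^2 - 2\, \bm m^\top \bm w + \tfrac{1}{a_1}(c_1 + c_2) \le 0$, i.e.
\[
\| \bm w - \bm m \|^2 \le \| \bm m \|^2 - \tfrac{1}{a_1}(c_1 + c_2) .
\]
The right-hand side is precisely $r^2$ for $r$ as defined in the lemma, so $\bm w$ lies in the ball $\Theta_{[C]}$ with center $\bm m$ and radius $r$, which is the claim.

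The only point that needs a word of care is that $r$ is well defined, i.e. that the radicand $\| \bm m \|^2 - \tfrac{1}{a_1}(c_1 + c_2)$ is nonnegative. This is immediate from what has just been shown: as soon as $\Theta^\prime_{[C]}$ is non-empty there is at least one admissible $\bm w$, and for that $\bm w$ the left-hand side $\| \bm w - \bm m \|^2$ is nonnegative, forcing the right-hand side to be nonnegative as well. Equivalently, since $\| \bm m \|^2 - \tfrac{1}{a_1}(c_1 + c_2) = \tfrac{1}{4 a_1^2}\bigl( \| \bm b_1 + \bm b_2 \|^2 - 4 a_1 (c_1 + c_2) \bigr)$, nonnegativity of the radicand coincides exactly with the non-emptiness condition recorded in the footnote. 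Since the whole argument is a short chain of elementary manipulations, there is no genuine obstacle; the only things to double-check are the sign bookkeeping when completing the square and the consistency between the \emph{non-empty} hypothesis and the reality of $r$.
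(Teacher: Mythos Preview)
Your proof is correct and follows essentially the same route as the paper: the paper's proof simply says ``by substituting $\xi$ in the second inequality into the first inequality, we immediately have $\|\bm w - \bm m\| \le r$,'' which is exactly your chaining-and-completing-the-square argument written out in full. Your additional remark that non-emptiness guarantees the radicand is nonnegative (and matches the footnote condition) is a nice clarification the paper leaves implicit.
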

\end{it}

\noindent
The proof is presented in Appendix \ref{app:proofs}.
The lemma suggests that a Ball Test can be constructed 
by introducing two types of necessary conditions
in the form of quadratic and linear constraints in
\eq{eq:tilde.theta}.
In the following three lemmas, 
we introduce three types of necessary conditions for the optimal solution
$(\bm w^*_{[C]}, \xi^*_{[C]})$
of the problem
\eq{eq:str.SVM}.

\begin{it}
\begin{lemm}[Necessary Condition 1 (NC1)]
\label{lemm:nc1}
 Let 
 $(\tilde{\bm w}, \tilde{\xi})$
 be a feasible solution 
 of 
 \eq{eq:str.SVM}.
 Then, 
\begin{eqnarray}
 \label{eq:nc1}
  \frac{1}{C}
 \| \bm w^*_{[C]} \|^2
 -
 \frac{1}{C}
 \tilde{\bm w}^\top
 \bm w^*_{[C]}
 - 
 \tilde{\xi}
 +
 \xi^*_{[C]}
 \le
 0.
 \end{eqnarray}
\end{lemm}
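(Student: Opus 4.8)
The plan is to exploit the optimality of $(\bm w^*_{[C]}, \xi^*_{[C]})$ for the convex program \eq{eq:str.SVM} by comparing its objective value against that of the feasible point $(\tilde{\bm w}, \tilde{\xi})$, but the comparison should be done through the mechanism that makes strongly convex programs tractable: the objective $\cP_{[C]}(\bm w, \xi) = \tfrac12\|\bm w\|^2 + C\xi$ is $1$-strongly convex in $\bm w$ (and linear, hence convex, in $\xi$), so along the segment joining the optimum to any feasible point the objective cannot decrease, and strong convexity gives a quantitative version of this. Concretely, first I would note that for any $t \in [0,1]$ the convex combination $(\bm w_t, \xi_t) \triangleq (1-t)(\bm w^*_{[C]}, \xi^*_{[C]}) + t(\tilde{\bm w}, \tilde{\xi})$ is feasible for \eq{eq:str.SVM} because the feasible region is convex (an intersection of half-spaces). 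Hence $\cP_{[C]}(\bm w_t, \xi_t) \ge \cP_{[C]}(\bm w^*_{[C]}, \xi^*_{[C]})$ for all such $t$.

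Next I would expand $\cP_{[C]}(\bm w_t, \xi_t)$ explicitly. The term $C\xi_t$ is affine: $C\xi_t = C\xi^*_{[C]} + tC(\tilde\xi - \xi^*_{[C]})$. The quadratic term expands as
\begin{eqnarray*}
\tfrac12\|\bm w_t\|^2
=
\tfrac12\|\bm w^*_{[C]}\|^2
+ t\,(\bm w^*_{[C]})^\top(\tilde{\bm w} - \bm w^*_{[C]})
+ \tfrac{t^2}{2}\|\tilde{\bm w} - \bm w^*_{[C]}\|^2.
\end{eqnarray*}
Subtracting $\cP_{[C]}(\bm w^*_{[C]}, \xi^*_{[C]})$ and using the optimality inequality $\cP_{[C]}(\bm w_t, \xi_t) - \cP_{[C]}(\bm w^*_{[C]}, \xi^*_{[C]}) \ge 0$, dividing by $t > 0$ and letting $t \downarrow 0$ kills the $O(t^2)$ term and leaves the first-order stationarity condition
\begin{eqnarray*}
(\bm w^*_{[C]})^\top(\tilde{\bm w} - \bm w^*_{[C]}) + C(\tilde\xi - \xi^*_{[C]}) \ge 0,
\end{eqnarray*}
which, after dividing by $C$ and rearranging, is precisely \eq{eq:nc1}.

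The only subtlety I anticipate, and the step I would be most careful about, is justifying that the feasible region of \eq{eq:str.SVM} is genuinely convex and that $(\tilde{\bm w},\tilde\xi)$ being feasible is enough — i.e. that the constraint $\xi \ge \sum_i s_i(1 - y_i f(\bm x_i))$ for all $\bm s \in \{0,1\}^n$ defines a convex set in $(\bm w,\xi)$. This holds because each such constraint is affine in $(\bm w, \xi)$ (recall $f(\bm x_i) = f(\bm x_i;\bm w)$ is linear in $\bm w$), so the feasible region is an intersection of half-spaces and hence convex; the infinitely-many-$\bm s$ indexing is harmless. An alternative, and perhaps cleaner, route avoiding the limiting argument entirely is to invoke the first-order optimality condition for convex programs directly: at the minimizer, the negative gradient of $\cP_{[C]}$ lies in the normal cone of the feasible set, which immediately yields $\langle \nabla\cP_{[C]}(\bm w^*_{[C]},\xi^*_{[C]}),\ (\tilde{\bm w},\tilde\xi) - (\bm w^*_{[C]},\xi^*_{[C]})\rangle \ge 0$, and $\nabla\cP_{[C]} = (\bm w^*_{[C]}, C)$ gives the same inequality. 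Either way the proof is short; the content is just convexity plus the explicit form of the gradient.
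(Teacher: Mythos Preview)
Your proposal is correct and matches the paper's proof: the paper simply invokes the first-order optimality condition for convex programs (citing Proposition~2.1.2 in Bertsekas), i.e.\ $\nabla \cP_{[C]}(\bm w^*_{[C]}, \xi^*_{[C]})^\top\bigl((\tilde{\bm w},\tilde\xi)-(\bm w^*_{[C]},\xi^*_{[C]})\bigr)\ge 0$ with $\nabla\cP_{[C]}=(\bm w^*_{[C]},C)$, which is exactly your ``alternative route.'' Your first route via convex combinations and $t\downarrow 0$ is just a self-contained derivation of that same variational inequality, so the two arguments are essentially identical.
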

\end{it}

\begin{it}
\begin{lemm}[Necessary Condition 2 (NC2)]
\label{lemm:nc2}
 Let 
 $(\bm w^*_{[\check{C}]}, \xi^*_{[\check{C}]})$
 be the optimal solution for any other regularization parameter
 $\check{C} \in \RR_{++}$.
 Then, 
 \begin{eqnarray}
  \label{eq:nc2}
   - \frac{1}{\check{C}}
   \bm w_{[\check{C}]}^{* \top} \bm w^*_{[C]}
   +
   \frac{1}{\check{C}} \| \bm w^*_{[\check{C}]} \|^2 + \xi^*_{[\check{C}]} 
  \le \xi^*_{[C]}.
 \end{eqnarray}
\end{lemm}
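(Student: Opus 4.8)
The plan is to exploit the fact that the feasible region of problem \eq{eq:str.SVM} does \emph{not} depend on the regularization parameter --- only the objective $\cP_{[C]}$ does. Consequently $(\bm w^*_{[C]}, \xi^*_{[C]})$ is a feasible (though in general not optimal) point of the problem associated with $\check{C}$, whose minimizer is $(\bm w^*_{[\check{C}]}, \xi^*_{[\check{C}]})$. Inequality \eq{eq:nc2} should then fall out of the first-order optimality condition at $(\bm w^*_{[\check{C}]}, \xi^*_{[\check{C}]})$, tested against the feasible point $(\bm w^*_{[C]}, \xi^*_{[C]})$.

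First I would record two elementary facts. (i) The feasible set
\[
\mathcal{S} \triangleq \Big\{ (\bm w, \xi) \in \cF \times \RR ~\Big|~ \xi \ge \textstyle\sum_{i \in \bN_n} s_i \big( 1 - y_i f(\bm x_i ; \bm w) \big) ~~ \forall \bm s \in \{0,1\}^n \Big\}
\]
is convex (indeed a polyhedron in $\cF \times \RR$), because each of the $2^n$ defining constraints is affine in $(\bm w, \xi)$, using $f(\bm x_i ; \bm w) = \bm w^\top \Phi(\bm x_i)$. (ii) For fixed $\check{C} \in \RR_{++}$, the objective $\cP_{[\check{C}]}(\bm w, \xi) = \tfrac{1}{2}\|\bm w\|^2 + \check{C}\,\xi$ is convex and differentiable, with gradient $\bm w$ in the $\bm w$-block and $\check{C}$ in the $\xi$-block. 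The minimum over $\mathcal{S}$ is attained since $\cP_{[\check{C}]}$ is coercive on $\mathcal{S}$ --- note that $\xi \ge 0$ holds on $\mathcal{S}$ by taking $\bm s = \zeros$, so $\cP_{[\check{C}]} \ge \tfrac12 \| \bm w \|^2$ there.

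Next I would invoke the standard variational inequality characterizing a minimizer of a differentiable convex function over a convex set: at the optimum $(\bm w^*_{[\check{C}]}, \xi^*_{[\check{C}]})$,
\[
\big( \bm w^*_{[\check{C}]} \big)^\top \big( \bm w - \bm w^*_{[\check{C}]} \big) + \check{C}\,\big( \xi - \xi^*_{[\check{C}]} \big) \ge 0 \qquad \forall (\bm w, \xi) \in \mathcal{S}.
\]
Since $(\bm w^*_{[C]}, \xi^*_{[C]}) \in \mathcal{S}$ by the $C$-independence of the constraints, substituting $(\bm w, \xi) = (\bm w^*_{[C]}, \xi^*_{[C]})$ yields $\big( \bm w^*_{[\check{C}]} \big)^\top \bm w^*_{[C]} - \| \bm w^*_{[\check{C}]} \|^2 + \check{C}\,( \xi^*_{[C]} - \xi^*_{[\check{C}]} ) \ge 0$. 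Dividing by $\check{C} > 0$ and rearranging gives exactly \eq{eq:nc2}.

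I do not anticipate a genuine obstacle: the argument is routine convex analysis. The only points worth a sentence of justification are that $(\bm w^*_{[C]}, \xi^*_{[C]})$ is feasible for the $\check{C}$-problem (immediate from the $C$-independence of the constraint set) and that the variational inequality is applicable, which needs only differentiability of $\cP_{[\check{C}]}$ and convexity of $\mathcal{S}$, both of which hold. An equivalent derivation would go through the KKT stationarity conditions of \eq{eq:str.SVM}, but the variational-inequality route is the most direct and avoids re-deriving the primal--dual correspondence.
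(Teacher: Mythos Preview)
Your proposal is correct and follows essentially the same route as the paper: the paper's proof simply invokes Proposition~2.1.2 of Bertsekas (the first-order variational inequality for a differentiable convex objective over a convex feasible set) at the optimum $(\bm w^*_{[\check{C}]},\xi^*_{[\check{C}]})$, tested against the feasible point $(\bm w^*_{[C]},\xi^*_{[C]})$, and then rearranges. Your write-up is slightly more explicit about why the feasible set is convex and $C$-independent, but the argument is the same.
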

\end{it}

\begin{it}
\begin{lemm}[Necessary Condition 3 (NC3)]
\label{lemm:nc3}
 Let 
 $\hat{\bm s} \in \{0, 1\}^n$
 be an $n$-dimensional binary vector.
 Then, 
 \begin{eqnarray}
  \label{eq:nc3}
   - \bm z_{\hat{\bm s}}^\top \bm w^*_{[C]}
  +
   \hat{\bm s}^\top \ones
  \le
  \xi^*_{[C]},
  \text{ where }
 \bm z_{\hat{\bm s}} \triangleq \sum_{i \in \NN_n} \hat{s}_i \bm z_i. 
 \end{eqnarray}
\end{lemm}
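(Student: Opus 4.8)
The plan is to recognize that NC3 is nothing more than \emph{primal feasibility} of the optimal solution of the expanded problem \eqref{eq:str.SVM}, so once the notation is unpacked the proof should be essentially a one-line observation. First I would rewrite the generic constraint of \eqref{eq:str.SVM}: using $f(\bm x_i;\bm w)=\bm w^\top\Phi(\bm x_i)$ together with the abbreviation $\bm z_i\triangleq y_i\Phi(\bm x_i)$ introduced in Lemma \ref{lemm:Ball.test}, we have $y_i f(\bm x_i;\bm w)=\bm z_i^\top\bm w$, hence for a fixed $\bm s\in\{0,1\}^n$ the constraint $\xi\ge\sum_{i\in\NN_n}s_i(1-y_i f(\bm x_i))$ is exactly $\xi\ge \bm s^\top\ones-\bm z_{\bm s}^\top\bm w$ with $\bm z_{\bm s}\triangleq\sum_{i\in\NN_n}s_i\bm z_i$, i.e. $-\bm z_{\bm s}^\top\bm w+\bm s^\top\ones\le\xi$.

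Next, since $(\bm w^*_{[C]},\xi^*_{[C]})$ is by definition the minimizer of \eqref{eq:str.SVM}, it is in particular a feasible point, so it satisfies this inequality for \emph{every} $\bm s\in\{0,1\}^n$. Specializing to $\bm s=\hat{\bm s}$ and substituting $\bm w=\bm w^*_{[C]}$, $\xi=\xi^*_{[C]}$ yields precisely \eqref{eq:nc3}. That is the whole argument, and I do not expect any real obstacle.

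The only points that require a bit of care are (i) the sign bookkeeping inside $\bm z_i=y_i\Phi(\bm x_i)$, which is what turns the hinge term $1-y_i f(\bm x_i)$ into an affine function of $\bm w$ with the coefficients stated in \eqref{eq:nc3}; and (ii) that we are entitled to reason with \eqref{eq:str.SVM} at all, i.e. that its minimizer really is the SVM optimum $\bm w^*_{[C]}$, which is the equivalence with \eqref{eq:p.SVM.prob.} asserted (via the standard structural-SVM reformulation) just below \eqref{eq:str.SVM}. Unlike NC1 and NC2, which invoke \emph{optimality} (a first-order / variational inequality, and the value attained at another regularization parameter $\check C$), NC3 uses only feasibility; its screening power comes from the fact that \eqref{eq:str.SVM} carries one half-space constraint for each of the $2^n$ binary vectors $\bm s$, so a valid necessary condition is obtained for whichever $\hat{\bm s}$ we choose to plug in.
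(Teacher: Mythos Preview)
Your proposal is correct and matches the paper's own proof, which is the one-line observation that \eqref{eq:nc3} is simply one of the $2^n$ feasibility constraints of \eqref{eq:str.SVM} evaluated at the optimum. Your additional unpacking of the notation (rewriting $1-y_if(\bm x_i)$ via $\bm z_i$) and your remark contrasting NC3's feasibility-based nature with the optimality-based NC1/NC2 are accurate elaborations, but the core argument is identical.
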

\end{it}

\noindent
The proofs of these three lemmas are presented in Appendix \ref{app:proofs}.
Note that
NC1 is quadratic,
while
NC2 and NC3 are linear constraints 
in the form of
\eq{eq:tilde.theta}.
As described in the following theorems, 
\emph{Ball Test 1 (BT1)}
is constructed by using 
NC1 
and 
NC2,
while 
\emph{Ball Test 2 (BT2)}
is constructed
by using 
NC1
and 
NC3.

%
%
\begin{it}
\begin{theo}[Ball Test 1 (BT1)]
 Let
 $(\tilde{\bm w}, \tilde{\xi})$
 be any feasible solution
 and
 $(\bm w^*_{[\check{C}]}, \xi^*_{[\check{C}]})$
 be the optimal solution
 of 
 \eq{eq:str.SVM}
 for any other regularization parameter
 $\check{C}$.
 Then, 
 the optimal SVM solution
 $\bm w^*_{[C]}$
 is included in the ball
 $\Theta^{\rm (BT1)}_{[C]}
 \triangleq
 \{ \bm w ~\big|~ \| \bm w - \bm m_1 \| \le r_1 \}$,
 where
 \begin{eqnarray}
  \label{eq:BT1.m.r}
  \bm m_1
  \triangleq
  \frac{1}{2}(\tilde{\bm w} + \frac{C}{\check{C}} \bm w^*_{[\check{C}]}),
  ~
  r_1
  \triangleq
  \sqrt{
  \| \bm m_1\|^2
  - \frac{C}{\check{C}} \|\bm w^*_{[\check{C}]}\|^2
  + C(\tilde{\xi} - \xi^*_{[\check{C}]})
  }.
 \end{eqnarray}
 By applying the ball
 $\Theta^{\rm (BT1)}_{[C]}$
 to Lemma \ref{lemm:Ball.test},
 we can compute the lower bound
 $\ell^{\rm (BT1)}_{[C]}$
 and the upper bound
 $u^{\rm (BT1)}_{[C]}$.
\end{theo}
\end{it}

%
%
\begin{it}
\begin{theo}[Ball Test 2 (BT2)]
 Let
$(\tilde{\bm w}, \tilde{\xi})$
 be any feasible solution of 
 \eq{eq:str.SVM}
 and
 $\hat{\bm s}$
 be any $n$-dimensional binary vector
 in $\{0, 1\}^n$.
 Then, 
 the optimal SVM solution
 $\bm w^*_{[C]}$
 is included in the ball
  $\Theta^{\rm (BT2)}_{[C]} \triangleq \{ \bm w ~\big|~ \| \bm w - \bm m_2 \| \le r_2 \}$,
 where
 \begin{eqnarray*}
  \bm m_2
  \triangleq
  \frac{1}{2} (\tilde{\bm w} + C \bm z_{\hat{\bm s}}),
  ~
  r_2
  \triangleq
  \sqrt{
  \| \bm m_2\|^2
  + C( \tilde{\xi} - \hat{\bm s}^\top \ones)
  }.
 \end{eqnarray*}
 By applying the ball
 $\Theta^{\rm (BT2)}_{[C]}$
 to Lemma \ref{lemm:Ball.test},
 we can compute the lower bound
 $\ell^{\rm (BT2)}_{[C]}$
 and the upper bound
 $u^{\rm (BT2)}_{[C]}$.
\end{theo}
\end{it}


%
%
\subsection{Intersection Test}
\label{subsec:Intersection.Test}
We introduce a more powerful screening test  
called 
\emph{Intersection Test (IT)}
based on 
\begin{eqnarray*}
\Theta_{[C]}^{\rm (IT)} \triangleq \Theta_{[C]}^{\rm (BT1)} \cap \Theta_{[C]}^{\rm (BT2)}.
\end{eqnarray*}

%
%
\begin{it}
\begin{theo}[Intersection Test]
\label{theo:IT}
The lower and the upper bounds of 
$y_i f(\bm x_i; \bm w)$ 
in 
$\Theta_{[C]}^{\rm (IT)}$
are 
 \begin{eqnarray}
  \label{eq:it.test.low}
  \ell^{\rm (IT)}_{[C]i}
  \triangleq
  \min_{\bm w \in \Theta_{[C]}^{\rm (IT)}} y_i f(\bm x_i ; \bm w) = 
  \mycase{
  \ell^{\rm (BT1)}_{[C]i}
  &
  \text{ if }
  \frac{-\bm z_i^\top \bm \phi}{\| \bm z_i \|~\| \bm \phi \| } < \frac{\zeta - \| \bm \phi\|}{r_1},
  \\
  \ell^{\rm (BT2)}_{[C]i} 
  &
  \text{ if }
  \frac{\zeta}{r_2} < \frac{-\bm z_i^\top \bm \phi}{\| \bm z_i \|~\| \bm \phi \|}, 
  \\
  \bm z_i^\top \bm \psi - \kappa \sqrt{\| \bm z_i \|^2 - \frac{(\bm z_i^\top \bm \phi)^2}{\| \bm \phi \|^2}}
   & 
   \text{ if } 
   \frac{\zeta - \| \bm \phi \|}{r_1} 
   \le \frac{-\bm z_i^\top \bm \phi}{\| \bm z_i \|~\| \bm \phi \| } 
   \le \frac{\zeta}{r_2}
   }   
\end{eqnarray}
and
\begin{eqnarray}
\label{eq:it.upp}
u^{\rm (IT)}_{[C]i}
\triangleq
\max_{\bm w \in \Theta_{[C]}^{\rm (IT)}} y_i f(\bm x_i ; \bm w) = 
\mycase{
u^{\rm (BT1)}_{[C]i}
&
\text{ if }
\frac{\bm z_i^\top \bm \phi}{\| \bm z_i \|~\| \bm \phi \| }
<
\frac{\zeta - \| \bm \phi\|}{r_1},
\\
u^{\rm (BT2)}_{[C]i} 
&
\text{ if }
\frac{\zeta}{r_2}
<
\frac{\bm z_i^\top \bm \phi}{\| \bm z_i \|~\| \bm \phi \| },
\\
\bm z_i^\top \bm \psi + \kappa \sqrt{\| \bm z_i \|^2 - \frac{(\bm z_i^\top \bm \phi)^2}{\| \bm \phi \|^2}}
& 
\text{ if } 
\frac{\zeta - \| \bm \phi \|}{r_1} 
\le \frac{\bm z_i^\top \bm \phi}{\| \bm z_i \|~\| \bm \phi \| } 
\le \frac{\zeta}{r_2},
} 
\end{eqnarray}
where
\begin{eqnarray*}
\bm \phi
\triangleq
\bm m_1 - \bm m_2,
~
\zeta
\triangleq
\frac{1}{2 \| \bm \phi \| } ( \| \bm \phi \|^2 + r_2^2 - r_1^2 ),
~
\bm \psi
\triangleq \bm m_2 + \zeta \bm \phi / \|\bm \phi\| ,
~
\kappa
\triangleq \sqrt{ r_2^2 - \zeta^2}.
\end{eqnarray*}
\end{theo}
\end{it}

\noindent
The proof is presented in Appendix \ref{app:proofs}. 
Note that IT is guaranteed to be more powerful than BT1 and BT2
because
$\Theta^{\rm (IT)}_{[C]}$
is the intersection of 
$\Theta^{\rm (BT1)}_{[C]}$
and
$\Theta^{\rm (BT2)}_{[C]}$.

%
%

\begin{figure}[t]
\begin{center}
\includegraphics[width = 0.35\textwidth]{./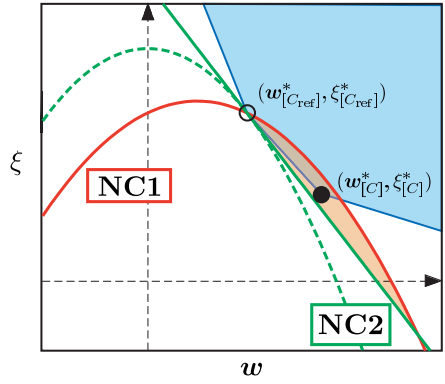}
\caption{
 A schematic illustration of the two necessary conditions NC1 and NC2 in
 the expanded solution space 
 when we use the reference solution
 $(\bm w^*_{[C_{\rm ref}]}, \xi^*_{[C_{\rm ref}]})$.
 The blue polytope in the upper-right corner
 indicates the feasible region of 
 \eq{eq:str.SVM}.
 %
 %
 The open circle
 $\circ$
 and the filled circle 
 $\bullet$
 indicate the optimal solutions
 $(\bm w^*_{[C_{\rm ref}]}, \xi^*_{[C_{\rm ref}]})$
 and
 $(\bm w^*_{[C]}, \xi^*_{[C]})$,
 respectively. 
 The red quadratic curve and green line indicate the boundaries of NC1 and NC2, respectively. 
 Note that the green line is the tangent at the point 
 $(\bm w^*_{[C_{\rm ref}]}, \xi^*_{[C_{\rm ref}]})$ 
 of the objective function 
 $\cP_{[C_{\rm ref}]}(\bm w, \xi)$
 which is shown by green dotted quadratic curve. 
 The area surrounded by the red quadratic curve and the green line is the
 region
 $\Theta^\prime_{C}$
 in which the optimal solution 
 $(\bm w^*_{[C]}, \xi^*_{[C]})$,
 exists.
 %
}
\label{fig:theta}
\end{center}
\end{figure}

 \section{Safe Sample Screening in Practice}
\label{sec:in.practice}
In order to use the safe sample screening methods in practice,
we need two additional side information: 
a feasible solution
$(\tilde{\bm w}, \tilde{\xi})$
and 
the optimal solution 
$(\bm w^*_{[\check{C}]}, \xi^*_{[\check{C}]})$
for a different regularization parameter
$\check{C}$.
Hereafter, 
we focus on a particular situation that the optimal solution
$\bm w^*_{[C_{\rm ref}]}$
for a smaller
$C_{\rm ref} < C$
is available,
and call such a solution as 
\emph{a reference solution}. 
We later see that such a reference solution can be easily available in practical model building process.
Let
$
\xi^*_{ [C_{\rm ref}] }
\triangleq
\sum_{i \in \NN_n}
\max
\{
0,
1 - y_i f( \bm x_i ; \bm w^*_{ [C_{\rm ref}] })
\}
$.
By replacing both of 
$(\tilde{\bm w}, \tilde{\xi})$
and 
$(\bm w^*_{[\check{C}]}, \xi^*_{[\check{C}]})$ 
with 
$(\bm w^*_{[C_{\rm ref}]}, \xi^*_{[C_{\rm ref}]})$, 
the centers and the radiuses of
$\Theta^{\rm (BT1)}_{[C]}$
and 
$\Theta^{\rm (BT1)}_{[C]}$
are rewritten as
\begin{eqnarray*}
 \bm m_1 = \frac{C + C_{\rm ref}}{2 C_{\rm ref}} \bm w^*_{[C_{\rm ref}]},
 r_1 = \frac{C - C_{\rm ref}}{2 C_{\rm ref}} \| \bm w^*_{[C_{\rm ref}]}\|,
\end{eqnarray*}
\begin{eqnarray*}
 \bm m_2 = \frac{1}{2}(\bm w^*_{[C_{\rm ref}]} + C \bm z_{\hat{\bm s}}), 
 r_2 = \sqrt{\| \bm m_2 \|^2 + C(\xi^*_{[C_{\rm ref}]} - \hat{\bm s}^\top \bm 1)}.
\end{eqnarray*}
A geometric interpretation of the two necessary conditions 
NC1 
and 
NC2 
in this special case is illustrated in \figurename 
\ref{fig:theta}. 
In the rest of this section,
we discuss how to obtain reference solutions and other practical issues.


%
%
\subsection{How to obtain a reference solution}
\label{subsec:how.to.compute.reference.solution}
The following lemma implies that, 
for a sufficiently small regularization parameter 
$C$, 
we can make use of a trivially obtainable reference solution.

\begin{it}
\begin{lemm}
\label{lemm:C.min}
Let 
$C_{\rm min} \triangleq 1/\max_{i \in \bN_n} (\bm Q \ones)_i$.
 Then, 
 for 
 $C \in (0,C_{\rm min}]$, 
 the optimal solution of the dual SVM formulation 
 \eq{eq:d.SVM.prob.}
 is written as
 $\bm \alpha^*_{[C]} = C \ones$.
\end{lemm}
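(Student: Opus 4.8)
The plan is to verify directly that $\bm\alpha = C\ones$ satisfies the first-order optimality condition for the concave maximization in \eq{eq:d.SVM.prob.}, which for a concave objective over a convex feasible set is sufficient (not merely necessary) for global optimality. First I would record that $\bm Q = \mathrm{diag}(\bm y)\,\bm K\,\mathrm{diag}(\bm y)$ is positive semidefinite, since $\bm K$ is a Mercer kernel matrix; hence $\cD(\bm\alpha) = -\tfrac12 \bm\alpha^\top \bm Q\bm\alpha + \ones^\top\bm\alpha$ is concave, and the feasible box $[0,C]^n$ is convex. Consequently a feasible point $\bm\alpha^\star$ is a global maximizer of \eq{eq:d.SVM.prob.} if and only if $\nabla\cD(\bm\alpha^\star)^\top(\bm\alpha-\bm\alpha^\star)\le 0$ for every feasible $\bm\alpha$.

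Next I would compute the gradient $\nabla\cD(\bm\alpha) = \ones - \bm Q\bm\alpha$, so that at the candidate point $\bm\alpha^\star = C\ones$ the $i$-th component is $(\nabla\cD(C\ones))_i = 1 - C(\bm Q\ones)_i$ for each $i \in \bN_n$. The definition $C_{\rm min} = 1/\max_{i}(\bm Q\ones)_i$ together with the hypothesis $C \le C_{\rm min}$ gives $C(\bm Q\ones)_i \le C\max_{j}(\bm Q\ones)_j \le 1$ for every index $i$ with $(\bm Q\ones)_i > 0$; for any index $i$ with $(\bm Q\ones)_i \le 0$ we trivially have $1 - C(\bm Q\ones)_i \ge 1 > 0$. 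In either case $(\nabla\cD(C\ones))_i \ge 0$ for all $i \in \bN_n$.

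Finally, for any feasible $\bm\alpha \in [0,C]^n$ we have $\alpha_i - C \le 0$ for every $i$, hence $(\nabla\cD(C\ones))_i\,(\alpha_i - C) \le 0$; summing over $i \in \bN_n$ yields $\nabla\cD(C\ones)^\top(\bm\alpha - C\ones) \le 0$, which is exactly the optimality condition above. Therefore $\bm\alpha^\star_{[C]} = C\ones$, as claimed.

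I do not anticipate a substantive obstacle here. The two points requiring care are (i) invoking positive semidefiniteness of $\bm Q$ so that the first-order variational inequality is sufficient for optimality rather than only necessary, and (ii) the sign bookkeeping for $(\bm Q\ones)_i$, which may be negative for some samples but only makes the key inequality easier. One could equivalently argue through the KKT conditions of \eq{eq:d.SVM.prob.} by exhibiting nonnegative multipliers for the active constraints $\alpha_i \le C$, namely $\mu_i = 1 - C(\bm Q\ones)_i \ge 0$, but the variational-inequality form above avoids introducing these multipliers explicitly.
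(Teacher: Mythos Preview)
Your proof is correct and follows essentially the same approach as the paper: both verify the first-order optimality condition by establishing the key inequality $1 - C(\bm Q\ones)_i \ge 0$ for all $i$ when $C \le C_{\rm min}$. The only cosmetic difference is that the paper interprets $C(\bm Q\ones)_i = y_i f(\bm x_i; \bm w^*_{[C]}) \le 1$ through the primal margin and invokes the SVM optimality conditions \eq{eq:SVM.opt.} (all samples lie in $\cE \cup \cL$, so $\alpha_i^* = C$), whereas you stay entirely in the dual and appeal directly to the variational inequality for concave maximization over a box; these are two phrasings of the same KKT verification.
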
 
\end{it}
The proof is presented in Appendix \ref{app:proofs}. 
Without loss of generality,
we only consider the case with 
$C > C_{\rm min}$,
where we can use the solution 
$\bm w^*_{[C_{\rm min}]}$
as the reference solution.

%
%
\subsection{Regularization path computation}
\label{subsec:regularization.path}
In model selection process, a sequence of SVM classifiers with various
different regularization parameters $C$ are trained.
Such a sequence of the solutions is sometimes referred to as
\emph{regularization path}
\cite{Hastie04a,Giesen12b}.
%
%
Let us write the sequence as
$C_1 < \ldots < C_T$.
We note that 
SVM is easier to train
(the convergence tends to be faster)
for smaller regularization parameter
$C$. 
Therefore,
it is reasonable to compute the regularization path from smaller $C$ to larger $C$
with the help of 
\emph{warm-start} approach
\cite{decoste00a},
where 
the previous optimal solution at 
$C_{t - 1}$
is used as the initial starting point of the next optimization problem 
for 
$C_t$.
In such a situation, 
we can make use of the previous solution
at
$C_{t - 1}$
as the reference solution.
Note that this is more advantageous
than using 
$C_{\rm min}$
as the reference solution
because the rules can be more powerful 
when the reference solution
is closer to
$\bm w^*_{[C]}$.
Moreover,
the rule evaluation cost can be reduced in regularization path computation scenario
(see \ref{subsec:complexity}). 


%
%
\subsection{How to select $\hat{\bm s}$ for the necessary condition 3}
\label{subsec:specific.s}
We discuss how to select
$\hat{\bm s} \in \{0, 1\}^n$
for NC3. 
Since a smaller region leads to a more powerful rule, 
it is reasonable to select 
$\hat{\bm s} \in \{0, 1\}^n$
so that 
the volume of the intersection region
$\Theta_{[C]}^{\rm (IT)} \equiv \Theta_{[C]}^{\rm (BT1)} \cup \Theta_{[C]}^{\rm (BT2)}$
is as small as possible.
%
%
We select
$\hat{\bm s}$
such that 
the distance between the two balls
$\Theta^{\rm (BT1)}_{[C]}$
and
$\Theta^{\rm (BT2)}_{[C]}$
is maximized,
while 
the radius of 
$\Theta^{\rm (BT2)}_{[C]}$
is minimized,
i.e.,
\begin{eqnarray}
\label{eq:hat.s.selection}
\hat{\bm s} = \arg\max_{\bm s \in \{0,1\}^n}
\left( \| \bm m_1 - \bm m_2 \|^2 - r_2^2 \right)
=
\arg\max_{\bm s \in \{0,1\}^n} \sum_{i \in \bN_n} s_i (1- \frac{C + C_{\rm ref}}{2C_{\rm ref}} y_i f(\bm x_i;\bm w_{[C_{\rm ref}]}^*)). 
\end{eqnarray}
Note that the solution of
\eq{eq:hat.s.selection}
can be straightforwardly obtained as 
\begin{eqnarray*}
 \hat{s}_i = I\{1 - \frac{C + C_{\rm ref}}{2C_{\rm ref}} y_i f(\bm x_i;\bm w_{[C_{\rm ref}]}^*) > 0\},
 ~
 i \in \NN_n,
\end{eqnarray*}
where
$I(\cdot)$
is the indicator function.

%
%
 \subsection{Kernelization}
\label{subsec:kernelization}
The proposed safe sample screening rules can be
\emph{kernelized},
i.e.,
all the computations can be carried out
without explicitly working on the high-dimensional feature space
$\cF$.
Remembering that
$Q_{ij} = \bm z_i^\top \bm z_j \equiv y_i \Phi(\bm x_i)^\top \Phi(\bm x_j) y_j$,
we can rewrite the rules by using the following relations:
\begin{eqnarray*}
 \| \bm z_i \| = \sqrt{Q_{ii}},~
 \| \bm w_{[C_{\rm ref}]}^* \| = \sqrt{\bm \alpha_{[C_{\rm ref}]}^{*\top} \bm Q \bm \alpha_{[C_{\rm ref}]}^*},~
 \bm z_i^\top \bm m_1 = \frac{C + C_{\rm ref}}{2 C_{\rm ref}} (\bm Q \bm \alpha^*_{[C_{\rm ref}]})_i,
\end{eqnarray*}
\begin{eqnarray*}
 \bm z_i^\top \bm m_2 = \frac{1}{2} (\bm Q \bm \alpha^*_{[C_{\rm ref}]})_i + \frac{C}{2} (\bm Q \hat{\bm s})_i,~
 \|\bm m_1\| = \frac{C + C_{\rm ref}}{2 C_{\rm ref}} \sqrt{\bm \alpha^{*\top}_{[C_{\rm ref}]} \bm Q \bm \alpha^{*}_{[C_{\rm ref}]}},
\end{eqnarray*}
\begin{eqnarray*}
\|\bm m_2\| = \frac{1}{2} \sqrt{(\bm \alpha^{*}_{[C_{\rm ref}]} + C \hat{\bm s})^\top \bm Q (\bm \alpha^{*}_{[C_{\rm ref}]} + C \hat{\bm s})},~
\bm m_1^\top \bm m_2 = \frac{C + C_{\rm ref}}{4 C_{\rm ref}} (\bm \alpha^{*\top}_{[C_{\rm ref}]} \bm Q \bm \alpha^{*}_{[C_{\rm ref}]} + C \bm \alpha^{*\top}_{[C_{\rm ref}]} \bm Q \hat{\bm s}).
\end{eqnarray*}
Exploiting the sparsities of 
$\bm \alpha_{[C_{\rm ref}]}^*$ 
and 
$\hat{\bm s}$,
some parts of the rule evaluations can be done efficiently
(see \S \ref{subsec:complexity} for details).

%
%
\subsection{Computational Complexity}
\label{subsec:complexity}
The computational complexities for evaluating the safe sample screening rules 
are summarized in 
Table \ref{tbl:complexity}. 
Note that the rule evaluation cost can be reduced in regularization path computation scenario.
The bottleneck of the rule evaluation is in the computation of 
$\bm \alpha_{[C_{\rm ref}]}^{*\top} \bm Q \bm \alpha_{[C_{\rm ref}]}^*$. 
Since many SVM solvers 
(including {\liblinear} and {\libsvm})
use the value 
$\bm Q \bm \alpha$ 
in their internal computation and store it in a cache,
we can make use of the cache value for circumventing the bottleneck. 
Furthermore,
BT2 (and henceforth IT)
can be efficiently computed in regularization path computation scenario by
caching
$\bm Q \hat{\bm s}$.

%
%
\begin{table}[h]
 \label{tbl:complexity}
 \caption{The computational complexities of the rule evaluations}
 \begin{center}
  \begin{tabular}{l|l|l|l}
   & linear & kernel	& kernel (cache) 
   \\ \hline
   BT1	& $\mathcal{O}(n d_s)$	& $\mathcal{O}(n^2)$	&	$\mathcal{O}(n)$ 
\\ 
BT2	& $\mathcal{O}(n d_s)$	& $\mathcal{O}(n^2)$ & $\mathcal{O}(n \|\Delta \hat{\bm s}\|_0)$ 
\\
IT	& $\mathcal{O}(n d_s)$	& $\mathcal{O}(n^2)$	& $\mathcal{O}(n \|\Delta \hat{\bm s}\|_0)$ 
\\ 
  \end{tabular} 
  \end{center}
 %
 For each of
 Ball Test 1 (BT1),
 Ball Test 2 (BT2),
 and
 Intersection Test (IT), 
 the complexities
 for evaluating the safe sample screening rules for all
 $i \in \NN_n$
 of
 linear SVM 
 and
 nonlinear kernel SVM 
 (with and without using the cache values
 as discussed in \S \ref{subsec:regularization.path})
 are shown. 
Here, 
$d_s$ 
indicates the average number of non-zero features for each sample
 and 
 $\|\Delta \hat{\bm s}\|_0$
 indicates the number of different elements in
 $\hat{\bm s}$
 between
 two consecutive 
 $C_{t - 1}$
 and 
 $C_t$
 in regularization path computation scenario.
\end{table}

\subsection{Relation with existing approaches}
\label{subsec:relation.with.feature.screening}
This work is highly inspired by the 
\emph{safe feature screening} 
introduced by El Ghaoui et al. 
\cite{ElGhaoui12b}.
%
%
After the seminal work by El Ghaoui et al. \cite{ElGhaoui12b}, 
many efforts have been devoted for improving screening performances
\cite{Xiang12a,Xiang12b,Dai12a,Wang12a,Wang13c,Wang13d,Wu13a,Wang13a,Wang13b}.
All the above listed studies are designed for screening the features
in $L_1$ penalized linear model\footnote{
El Ghaoui et al. \cite{ElGhaoui12b}
also studied safe feature screening for $L_1$-penalized SVM.
Note that their work is designed for screening features based on the property of $L_1$ penalty,
and it cannot be used for sample screening.}~\footnote{
Jaggie et al. \cite{Jaggie13a}
discussed the connection between LASSO and ($L_2$-hinge) SVM,
where they had an comment that
the techniques used in safe feature screening for LASSO might be also useful in the context of SVM.
}.

As the best of our knowledge, 
the approach presented in our conference paper \cite{Ogawa13a}
is the first safe sample screening method 
that can
\emph{safely}
eliminate a subset of the samples
before actually solving the training optimization problem. 
Note that this extension is non-trivial
because the feature sparseness in a linear model 
stems from the $L_1$ penalty,
while the sample sparseness in an SVM
is originated from the large-margin principle.

After our conference paper \cite{Ogawa13a} was published, 
Wang et al.
\cite{Wang13e}
recently proposed a method called
\emph{DVI test},
and showed that it is more powerful than our previous method in 
\cite{Ogawa13a}.
In this paper, we further go beyond the DVI test. 
We can show that DVI test 
is equivalent to Ball Test 1 (BT1)
in a special case
(the equivalence is shown in Appendix \ref{app:equivalence.DVI}).
Since 
the region
$\Theta^{\rm (IT)}_{[C]}$
is included in the region
$\Theta^{\rm (BT1)}_{[C]}$, 
Intersection Test (IT) is theoretically guaranteed to be more powerful than DVI test.
We will also empirically demonstrate that IT consistently outperforms DVI test
in terms of screening performances in \S \ref{sec:experiments}.

One of our non-trivial contributions is in 
\S \ref{subsec:Ball.tests.for.SVM},
where a ball-form region is constructed 
by first considering a region in the expanded solution space
and then projecting it onto the original solution space.
The idea of merging two balls for constructing the intersection region
in \S \ref{subsec:Intersection.Test}
is also our original contribution. 
We conjecture that 
the basic idea of Intersection Test can be also useful for safe \emph{feature} screening.

%

\section{Experiments}
\label{sec:experiments}
We demonstrate the advantage of the proposed safe sample screening methods
through numerical experiments.
We first describe the problem setup of
\figurename \ref{fig:toy.example} 
in
\S \ref{subsec:toy.experiment}.
In \S \ref{subsec:screening.rate},
we report the screening rates,
i.e.,
how many percent of the non-SVs can be screened out by safe sample screening.
In \S \ref{subsec:computation.time},
we show that the computational cost of the state-of-the-art SVM solvers
({\libsvm}
\cite{Chang11a}
and 
{\liblinear}
\cite{Fan08b}\footnote{
Since
the original
{\libsvm}
cannot be used for the model
without \emph{bias} term,
we slightly modified the code, 
while 
we used 
{\liblinear}
as it is because 
it is originally designed for models without bias term.}
)
can be substantially reduced with the use of safe sample screening.
%
%
Note that DVI test proposed in
\cite{Wang13e}
is identical with BT1
in all the experimental setups considered here
(see Appendix \ref{app:equivalence.DVI}).
Table \ref{tbl:data.set}
summarizes the benchmark data sets used in our experiments.

%
%
\begin{table}[h]
\caption{Benchmark data sets used in the experiments}
\label{tbl:data.set}
\begin{center}
 \begin{tabular}{l|r|r}
Data Set & \#samples ($n$) & \#features ($d$) \\ 
\hline \hline
D01: B.C.D		& 569 & 30 \\ 
D02: dna		& 2,000 & 180 \\ 
D03: DIGIT1		& 1,500 & 241 \\ 
D04: satimage	& 4,435 & 36 \\ 
\hline
D05: gisette		& 6,000 & 5,000 \\ 
D06: mushrooms	& 8,124 & 112 \\ 
D07: news20	& 19,996 & 1,355,191 \\
D08: shuttle		& 43,500 & 9 \\
\hline
D09: acoustic	& 78,832 & 50 \\ 
%
%
%
D10: url			& 2,396,130 & 3,231,961 \\ 
D11: kdd-a		& 8,407,752 & 20,216,830 \\ 
D12: kdd-b		& 19,264,097 & 29,890,095 \\ 
 \end{tabular} 
\end{center}

 \vspace*{2.5mm}
 
  We refer
  D01 $\sim$ D04
  as \emph{small},
  D05 and D08
  as \emph{medium},
  and D09 $\sim$ D12
  as \emph{large} data sets. 
  We only used linear kernel for large data sets
  because the kernel matrix computation for 
  $n > 50,000$
  is computationally prohibitive.
\end{table}

 \subsection{Artificial toy example in \figurename \ref{fig:toy.example}}
\label{subsec:toy.experiment}
The data set
$\{(\bm x_i, y_i)\}_{i \in \NN_{1000}}$
in 
\figurename \ref{fig:toy.example}
was generated as
\begin{eqnarray*}
 &&
  \bm x_i
  \sim
  N([-0.5, -0.5]^\top, 1.5^2\bm{I})
  \text{ and }
  y_i = -1
  ~\text{for odd}~
  i,
  \\
  &&
  \bm x_i
  \sim
  N([+0.5, +0.5]^\top, 1.5^2\bm{I})
  \text{ and }
  y_i = +1
  ~\text{for even}~
  i,
\end{eqnarray*}
where
$\bm I$
is the identity matrix.
We considered the problem of learning a linear classifier
at $C = 10$.
Intersection Test was conducted by using the reference solution at 
$C_{\rm ref} = 5$.
For the purpose of illustration,
\figurename \ref{fig:toy.example}
only highlights the area
in which the samples are screened out as the members of $\cR$
(red and blue shaded regions).

 %
 %
 \subsection{Screening rate}
 \label{subsec:screening.rate}
 We report the screening rates of BT1, BT2 and IT. 
 The screening rate is defined as the number of the screened samples over the total number of the non-SVs
 (both in $\cR$ and $\cL$).
 The rules were constructed by using the optimal solution at 
 $C_{\rm ref} (< C)$
 as the reference solution. 
 We used linear kernel and RBF kernel
 $K(\bm x, \bm x^\prime) = \exp(-\gamma \|\bm x - \bm x^\prime\|^2)$
 where
 $\gamma \in \{0.1/d, 1/d, 10/d\}$
 is a kernel parameter and $d$ is the input dimension.

 Due to the space limitation,
 we only show the results on four small data sets with $C = 10$ in
 \figurename \ref{fig:screening.rate}.
 In each plot,
 the horizontal axis denotes
 $C_{\rm ref}/C \in (0, 1]$.
 In most cases, 
 the screening rates increased as
 $C_{\rm ref}/C$
 increases from 0 to 1,
 i.e., 
 the rules are more powerful when the reference solution
 $\bm w^*_{[C_{\rm ref}]}$
 is closer to
 $\bm w^*_{[C]}$.
 The screening rates of IT were always higher than those of BT1 and BT2
 because
 $\Theta^{\rm (IT)}_{[C]}$
 is shown to be smaller than 
 $\Theta^{\rm (BT1)}_{[C]}$
 and 
 $\Theta^{\rm (BT2)}_{[C]}$
 by construction.
 The three tests behaved similarly in other problem setups.

\begin{figure}[t]
 \begin{center}
  \begin{tabular}{cccc}
   \includegraphics[width = 0.225\textwidth]{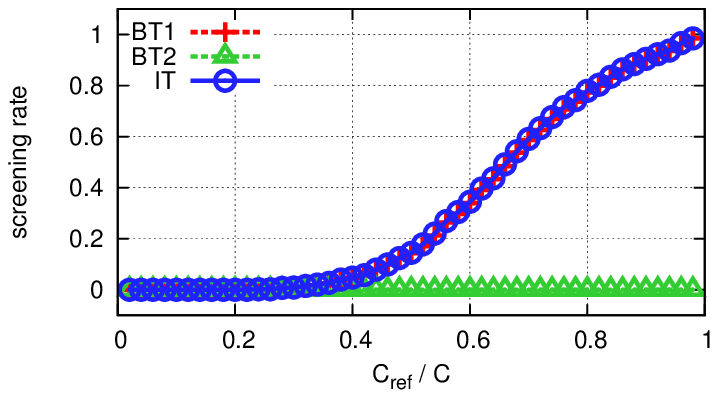} &
   \includegraphics[width = 0.225\textwidth]{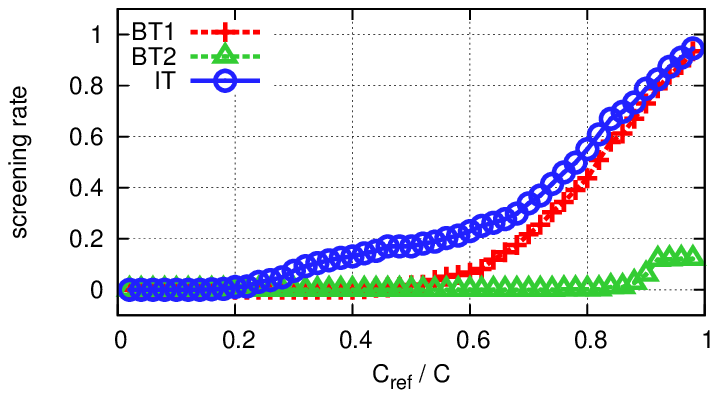} &
   \includegraphics[width = 0.225\textwidth]{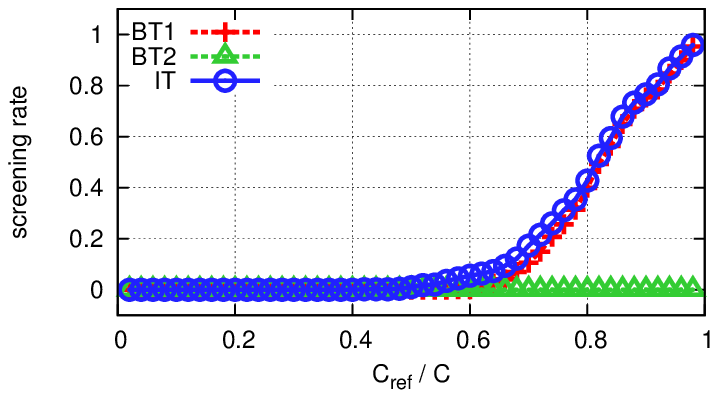} &
   \includegraphics[width = 0.225\textwidth]{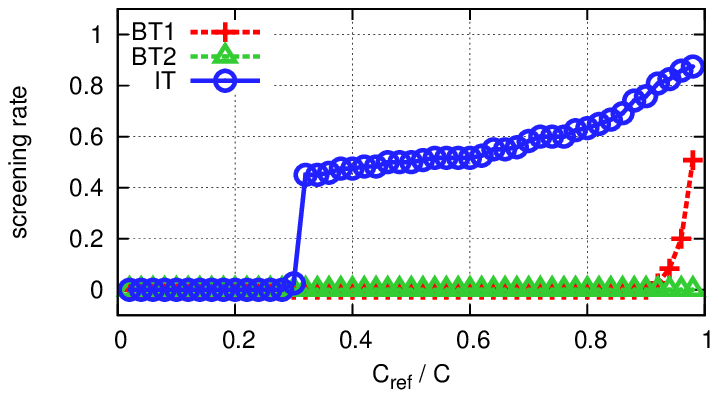} 
   \vspace*{-5mm}
   \\
   {\footnotesize D01, Linear} &
   {\footnotesize D01, RBF ($\gamma = 0.1/d$)} &      
   {\footnotesize D01, RBF ($\gamma = 1/d$)} &
   {\footnotesize D01, RBF ($\gamma = 10/d$)}
   \\
   \includegraphics[width = 0.225\textwidth]{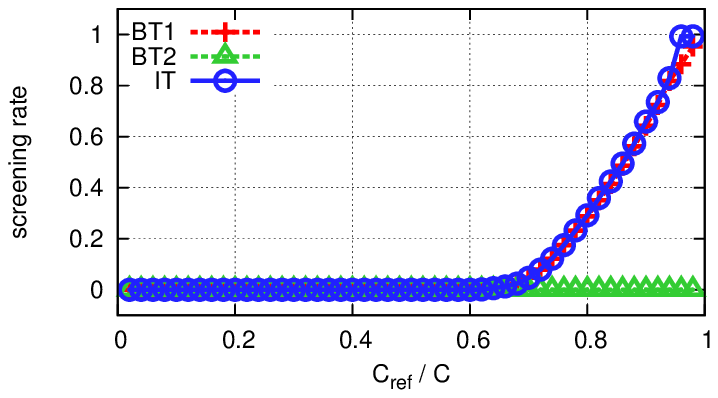} &
   \includegraphics[width = 0.225\textwidth]{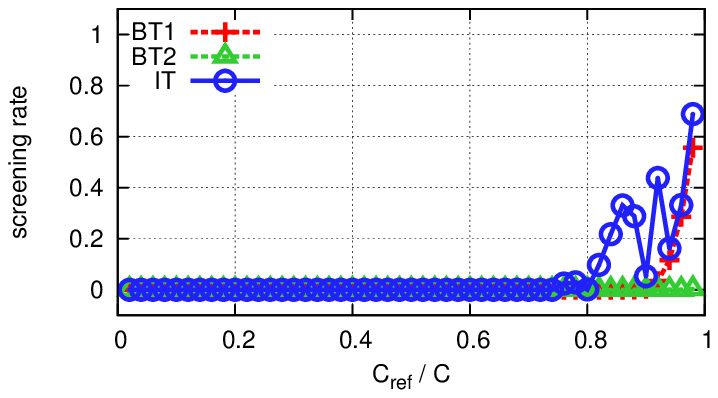} &
   \includegraphics[width = 0.225\textwidth]{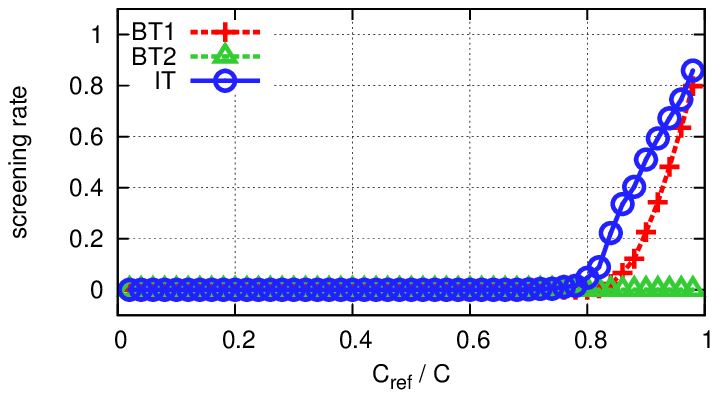} &
   \includegraphics[width = 0.225\textwidth]{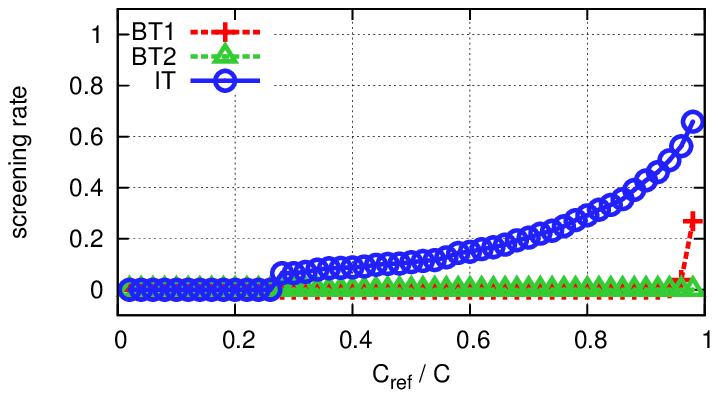} 
   \vspace*{-5mm}
   \\
   {\footnotesize D02, Linear} &
   {\footnotesize D02, RBF ($\gamma = 0.1/d$)} &      
   {\footnotesize D02, RBF ($\gamma = 1/d$)} &
   {\footnotesize D02, RBF ($\gamma = 10/d$)}
   \\
   \includegraphics[width = 0.225\textwidth]{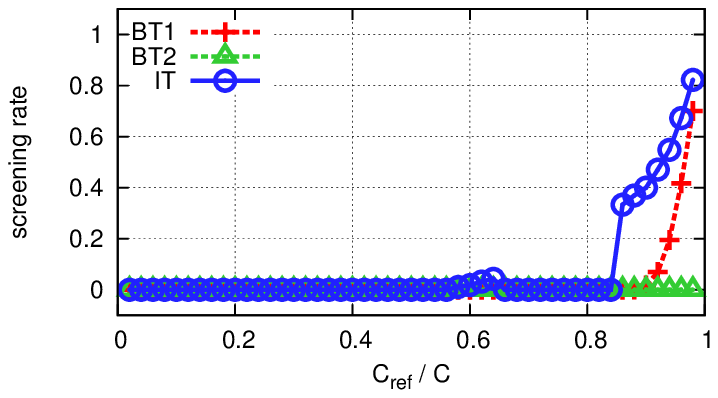} &
   \includegraphics[width = 0.225\textwidth]{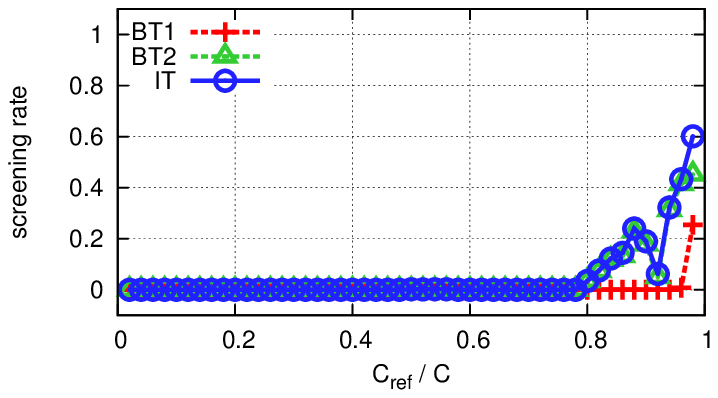} &
   \includegraphics[width = 0.225\textwidth]{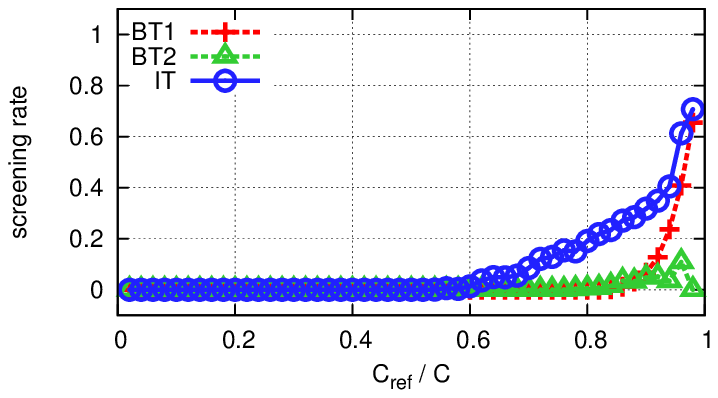} &
   \includegraphics[width = 0.225\textwidth]{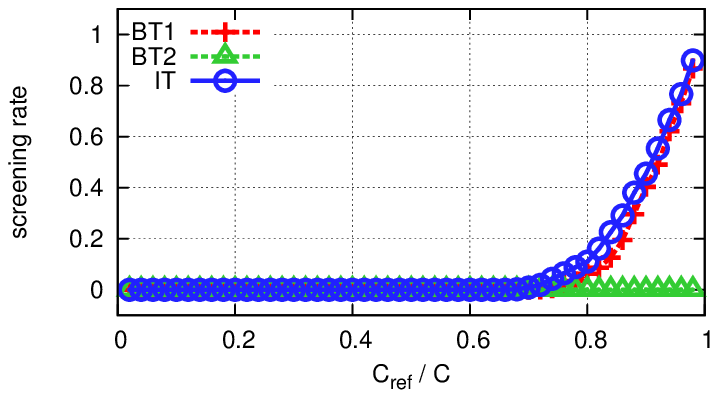} 
   \vspace*{-5mm}
   \\
   {\footnotesize D03, Linear} &
   {\footnotesize D03, RBF ($\gamma = 0.1/d$)} &      
   {\footnotesize D03, RBF ($\gamma = 1/d$)} &
   {\footnotesize D03, RBF ($\gamma = 10/d$)}
   \\
   \includegraphics[width = 0.225\textwidth]{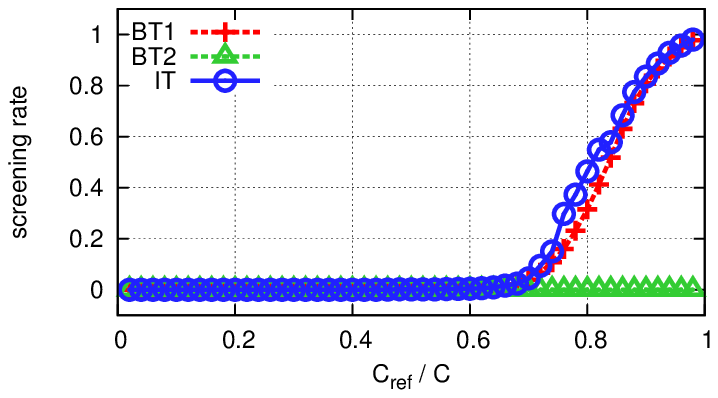} &
   \includegraphics[width = 0.225\textwidth]{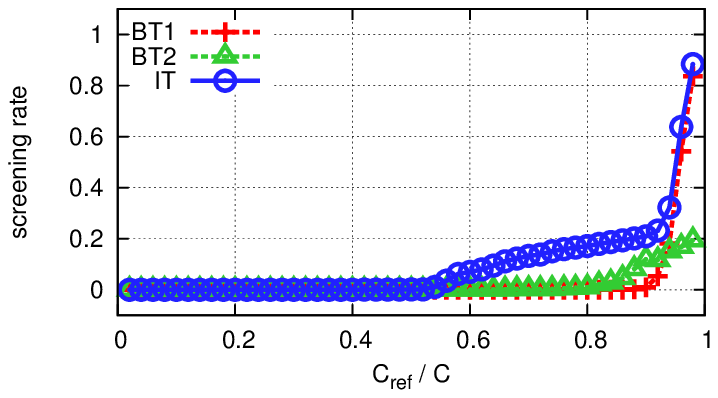} &
   \includegraphics[width = 0.225\textwidth]{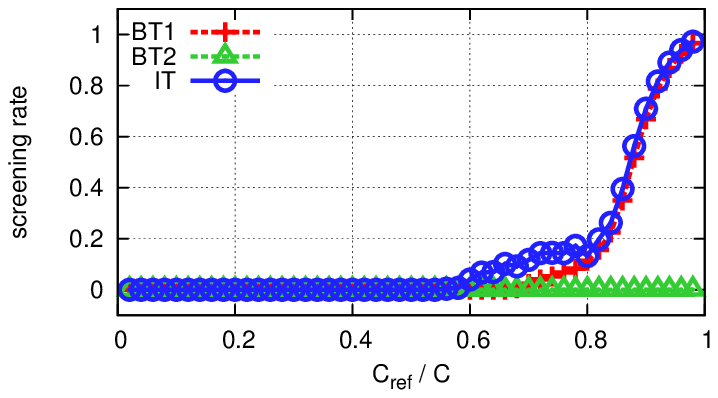} &
   \includegraphics[width = 0.225\textwidth]{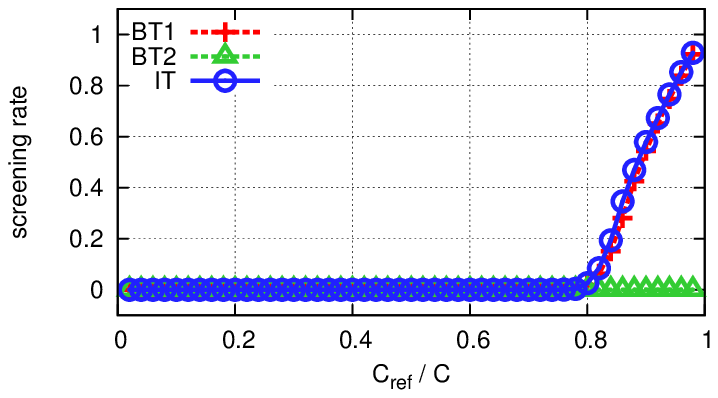} 
   \vspace*{-5mm}
   \\
   {\footnotesize D04, Linear} &
   {\footnotesize D04, RBF ($\gamma = 0.1/d$)} &      
   {\footnotesize D04, RBF ($\gamma = 1/d$)} &
   {\footnotesize D04, RBF ($\gamma = 10/d$)}
  \end{tabular}
  \caption{The screening rates of the three proposed safe screening tests
  BT1 (red), BT2 (green) and IT (blue).}
  \label{fig:screening.rate}
 \end{center}
\end{figure}
 
%
%
\subsection{Computation time}
\label{subsec:computation.time}
We investigate how much the computational cost of the entire SVM
training process can be reduced by safe sample screening.
As the state-of-the-art SVM solvers,
we used 
{\libsvm}
\cite{Chang11a}
and 
{\liblinear}
\cite{Fan08b} 
for nonlinear
and
linear kernel cases,
respectively\footnote{
In this paper,
we only study exact batch SVM solvers,
and do not consider online or sampling-based approximate solvers such as 
\cite{Crammer06a,ShalevShwartz07a,Hazan11a}.
}.
Many SVM solvers 
use
\emph{non-safe} sample screening heuristics 
in their inner loops.
The common basic idea in these heuristic approaches is to
\emph{predict}
which sample turns out to be SV or non-SV
(prediction step), 
and to solve a smaller optimization problem
defined only with the subset of the samples
predicted as SVs (optimization step).
These two steps must be repeated until all the optimality conditions in \eq{eq:SVM.opt.} are satisfied 
because the prediction step in these heuristic approaches is \emph{not safe}.
In {\libsvm} and {\liblinear}, such a heuristic is called \emph{shrinking}\footnote{
It is interesting to note that
shrinking algorithms
in
{\libsvm} and {\liblinear}
make their decisions 
based only on the (signed) margin
$y_i f(\bm x_i)$, i.e., 
if it is greater or smaller than a certain threshold,
the corresponding sample is predicted as a member of
$\cR$
or $\cL$,
respectively.
On the other hand, 
the decisions made by our safe sample screening methods 
do not solely depend on 
$y_i f(\bm x_i)$,
but also on the other quantities obtained from the reference solution
(see \figurename \ref{fig:toy.example} for example).
}.

We compared the total computational costs of the following six approaches: 
\begin{itemize}
 \item Full-sample training ({\bf Full}),

 \item Shrinking ({\bf Shrink}),

 \item Ball Test 1 ({\bf BT1}),

 \item Shrinking + Ball Test 1 ({\bf Shrink+BT1}).

 \item Intersection Test ({\bf IT}),

 \item Shrinking + Intersection Test ({\bf Shrink+IT}).
\end{itemize}
%
%
%
%
%
%

\vspace*{1mm}
\noindent
In {\bf Full} and {\bf Shrink},
we used 
{\libsvm} or {\liblinear}
with
and
without shrinking option,
respectively. 
In {\bf BT1} and {\bf Shrink+BT1},
we first screened out a subset of the samples by Ball Test 1, 
and the rest of the samples were fed into 
{\libsvm} or {\liblinear}
to solve the smaller optimization problem 
with 
and
without shrinking option,
respectively. 
In {\bf IT} and {\bf Shrink+IT}, 
we used Intersection Test for safe sample screening.

\subsubsection{Single SVM training}
\label{subsubsec:one.model.SVM}
First,
we compared the computational costs 
of training a single linear SVM for the large data sets
($n > 50,000$).
Here,
our task was to find the optimal solution at the regularization parameter
$C = C_{\rm ref}/0.9$ 
using the reference solution at
$C_{\rm ref} = 500C_{\rm min}$.

Table \ref{tbl:one.model.SVM}
shows 
the average computational costs of 5 runs.
%
%
%
The best performance was obtained in all the setups 
when both shrinking and IT screening are simultaneously used
({\bf Shrink+IT}).
{\bf Shrink+BT1}
also performed well,
but it was consistently outperformed by 
{\bf Shrink+IT}. 
%

%
%
\begin{table}[h]
\caption{The computation time [sec] for training a single SVM.}
\label{tbl:one.model.SVM}
\begin{center}
\hspace*{-4em}
\begin{tabular}{c||c|c||c|c|c|c||c|c|c|c}
 & \multicolumn{2}{c||}{{LIBLINEAR}}
 & \multicolumn{6}{c}{Safe Sample Screening} \\ \cline{2-11} 
 Data set	& {\bf Full} & {\bf Shrink} & {\bf BT1} & {\bf Shrink+BT1} & {\bf Rule} & {\bf Rate} & {\bf IT} & {\bf Shrink+IT} & {\bf Rule} & {\bf Rate} 
\\
\hline \hline
D09	& 98.2	& 2.57	& 95.1	& 2.21	& 0.0022	& 0.178	& 47.3	& {\bf 1.21}	& 0.0214	& 0.51
\\ 
%
%
%
D10	& 1881	& 327	& 1690	& 247	& 0.0514	& 0.108	& 1575	& {\bf 228}	& 2.24	& 0.125
\\ 
D11	& 2801	& 115	& 2699	& 97.2	& 0.203	& 0.136	& 2757	& {\bf 88.1}	& 2.78	& 0.136
\\ 
D12	& 16875	& 4558	& 7170	& 4028	& 0.432	& 0.138	& 12002	& {\bf 3293}	& 5.39	& 0.139
\end{tabular} 

\vspace*{2.5mm}

The computation time of the best approach in each setup is written in boldface.
{\bf Rule} and {\bf Rate} indicate the computation time and 
the screening rate of the each rules, respectively.

\end{center}
\end{table}

\subsubsection{Regularization path}
\label{subsubsec:eps.app.path}
As described in
\S \ref{subsec:regularization.path},
safe sample screening is especially useful
in regularization path computation scenario.
When we compute an SVM regularization path
for an increasing sequence of the regularization parameters 
$C_1 < \ldots < C_T$,
the previous optimal solution can be used as the reference solution.
We used a recently proposed 
\emph{$\eps$-approximation path ($\eps$-path)}
algorithm
\cite{Giesen12a,Giesen12b}
for setting a practically meaningful sequence of regularization parameters.
The detail $\eps$-approximation path procedure is described in Appendix
\ref{app:eps-path}.

%
%
\begin{table}[p]
\caption{The computation time [sec] for computing regularization path.}
\label{tab:cost.path}
 \begin{center}
\begin{tabular}{c|c||c|c||c|c|c|c}
 & & \multicolumn{2}{c||}{{LIBSVM or LIBLINEAR}}
 &	 \multicolumn{4}{c}{Safe Sample Screening} \\ \cline{3-8} 
 Data set	& Kernel	& {\bf Full} & {\bf Shrink} & {\bf BT1} & {\bf Shrink+BT1} &  {\bf IT} & {\bf Shrink+IT} 
\\
\hline \hline
& Linear		& 389	& 35.2	& 174	& {\bf 34.8}	& 177	& 34.8	
\\ \cline{2-8}
D01	
& RBF($0.1/d$)	& 43.8	& 4.51	& 9.08	& {\bf 2.8}	& 8.48	& 2.87
\\ 
& RBF($1/d$)	& 2.73	& 0.68	& 0.435	& 0.295	& 0.464	& {\bf 0.294}
\\ 
& RBF($10/d$)	& 0.73	& 0.4	& 0.312	& 0.221	& 0.266	& {\bf 0.213}
\\ \hline
& Linear		& 67	& 9.09	& 13.6	& {\bf 8.05}	& 13.4	& 8.14	
\\ \cline{2-8}
D02
& RBF($0.1/d$)	& 298	& 106	& 253	& 87.7	& 242	& {\bf 80.7}
\\ 
& RBF($1/d$)	& 13.9	& 5.27	& 7.14	& {\bf 2.5}	& 7.03	& 2.62
\\ 
& RBF($10/d$)	& 4.98	& 2.68	& 3.18	& 1.96	& 2.71	& {\bf 1.82}
\\ \hline
& Linear		& 369	& 59.3	& 221	& {\bf 56.7}	& 167	& 56.9	
\\ \cline{2-8}
D03
& RBF($0.1/d$)	& 938	& 261	& 928	& 262	& 741	& {\bf 203}
\\ 
& RBF($1/d$)	& 94.3	& 27.3	& 70.9	& 19.4	& 60.7	& {\bf 16.8}
\\ 
& RBF($10/d$)	& 6.93	& 2.71	& 2.92	& {\bf 0.77}	& 2.45	& 0.794
\\ \hline
& Linear		& 3435	& 33.7	& 3256	& {\bf 33.2}	& 3248	& 33.2	
\\ \cline{2-8}
D04	
& RBF($0.1/d$)	& 1365	& 565	& 1325	& 547	& 1178	& {\bf 488}
\\ 
& RBF($1/d$)	& 635	& 218	& 392	& 129	& 277	& {\bf 88.7}
\\ 
& RBF($10/d$)	& 31	& 20.4	& 3.89	& {\bf 1.5}	& 3.87	& 1.68
\\ \hline
& Linear		& 1532	& 350	& 894	& {\bf 318}	& 899	& 329	
\\ \cline{2-8}
D05	
& RBF($0.1/d$)	& 375	& 143	& 365	& 132	& 296	& {\bf 103}
\\ 
& RBF($1/d$)	& 63.9	& 30.1	& 33.4	& 13.5	& 25.4	& {\bf 10.2}
\\ 
& RBF($10/d$)	& 34.3	& 20.7	& 27.8	& 16.8	& 24.9	& {\bf 15.9}
\\ \hline
& Linear		& 19.8	& 2.64	& 8.12	& 2.08	& 8.57	& {\bf 2.03}	
\\ \cline{2-8}
D06	
& RBF($0.1/d$)	& 1938	& 618	& 1838	& 572	& 1395	& {\bf 423}
\\ 
& RBF($1/d$)	& 239	& 103	& 164	& 62.3	& 134	& {\bf 50.6}
\\ 
& RBF($10/d$)	& 94.3	& 56.3	& 70.5	& 44.2	& 66.2	& {\bf 40.9}
\\ \hline
& Linear		& 2619	& {\bf 1665}	& 2495	& 1697	& 2427	& 1769	
\\ \cline{2-8}
D07
& RBF($0.1/d$)	& 10358	& 5565	& 10239	& {\bf 5493}	& 10245	& 5770
\\ 
& RBF($1/d$)	& 33960	& 12797	& 34019	& 12918	& 30373	& {\bf 10152}
\\ 
& RBF($10/d$)	& 270984	& 67348	& 270313	& 67062	& 264433	& {\bf 56427}
\\ \hline
& Linear		& 37135	& 67	& 35945	& {\bf 63.6}	& 36386	& 67.8	
\\ \cline{2-8}
D08	
& RBF($0.1/d$)	& 278232	& 63192	& 275688	& 63608	& 253219	& {\bf 51932}
\\ 
& RBF($1/d$)	& 214165	& 60608	& 203155	& 56161	& 180839	& {\bf 48867}
\\ 
& RBF($10/d$)	& 167690	& 54364	& 129490	& 45644	& 125675	& {\bf 44463}
\\ \hline
\end{tabular} 

\vspace*{2.5mm}

The computation time of the best approach in each setup is written in boldface. 

\end{center}

\end{table}

\begin{figure}[t]
 \begin{center}
  \begin{tabular}{cccc}
                                                                                                
   \includegraphics[width = 0.225\textwidth]{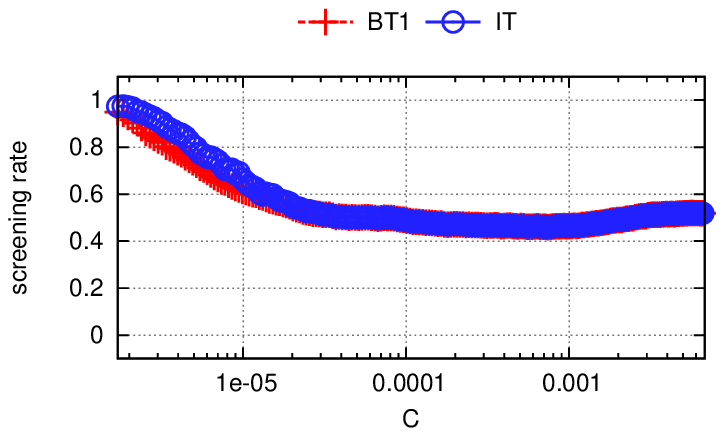} &
   \includegraphics[width = 0.225\textwidth]{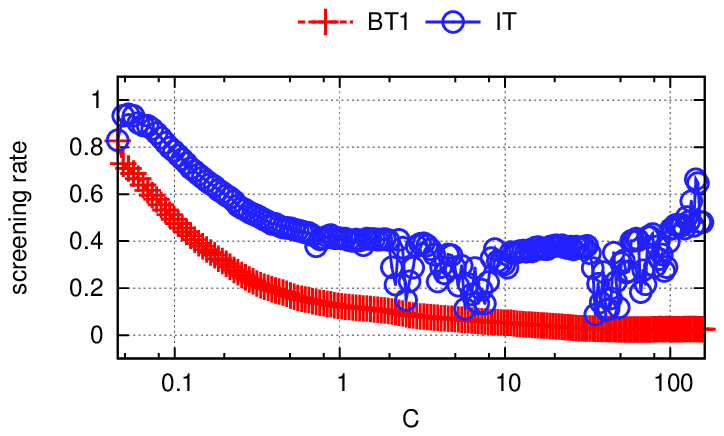} &
   \includegraphics[width = 0.225\textwidth]{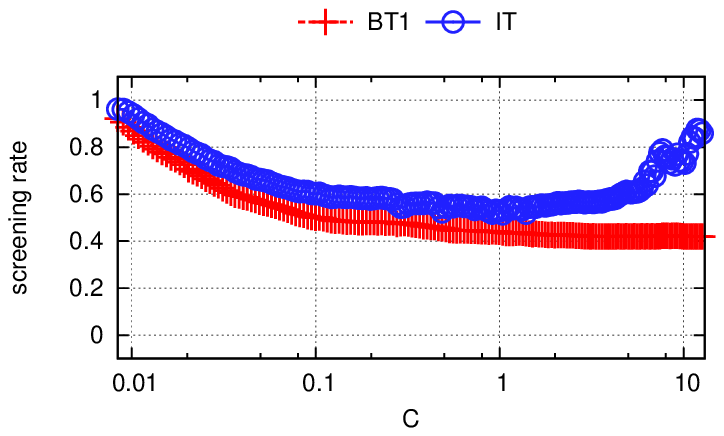} &
   \includegraphics[width = 0.225\textwidth]{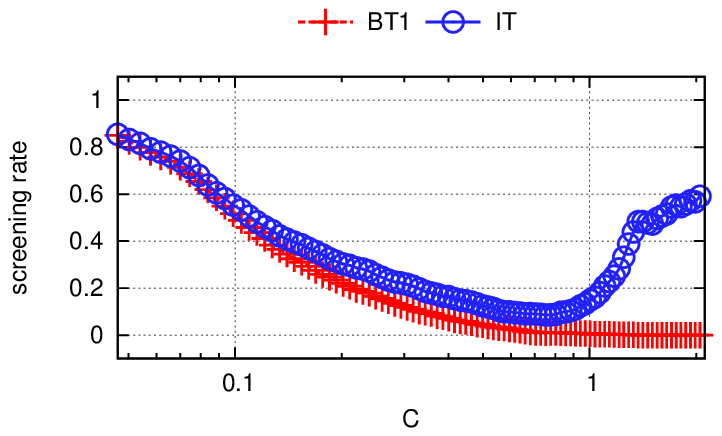} 
   \vspace*{-5mm}
   \\
   {\footnotesize D05, Linear} &
   {\footnotesize D05, RBF ($\gamma = 0.1/d$)} &      
   {\footnotesize D05, RBF ($\gamma = 1/d$)} &
   {\footnotesize D05, RBF ($\gamma = 10/d$) }
   \\
   \includegraphics[width = 0.225\textwidth]{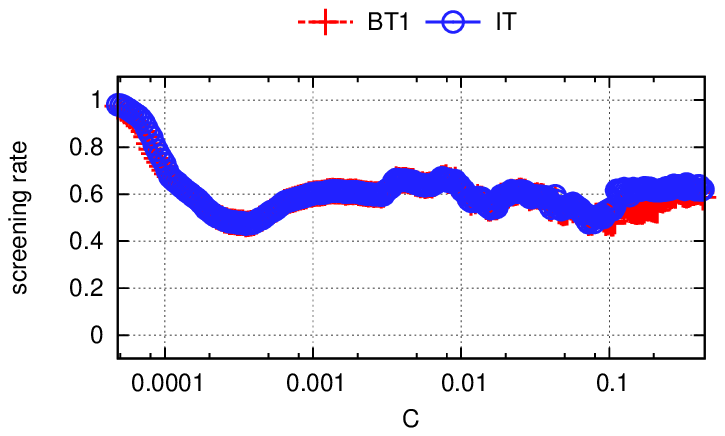} &
   \includegraphics[width = 0.225\textwidth]{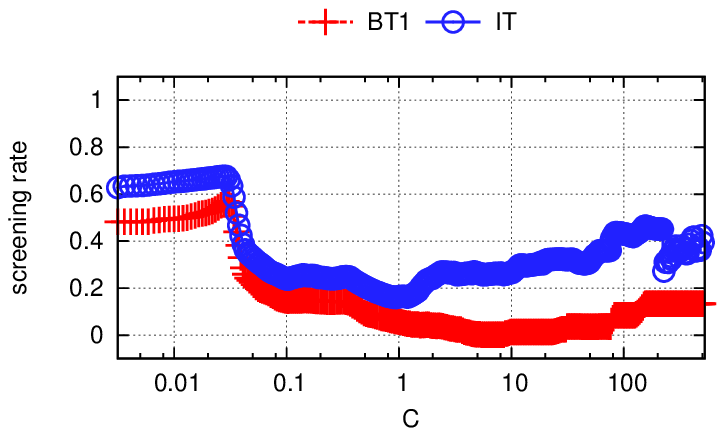} &
   \includegraphics[width = 0.225\textwidth]{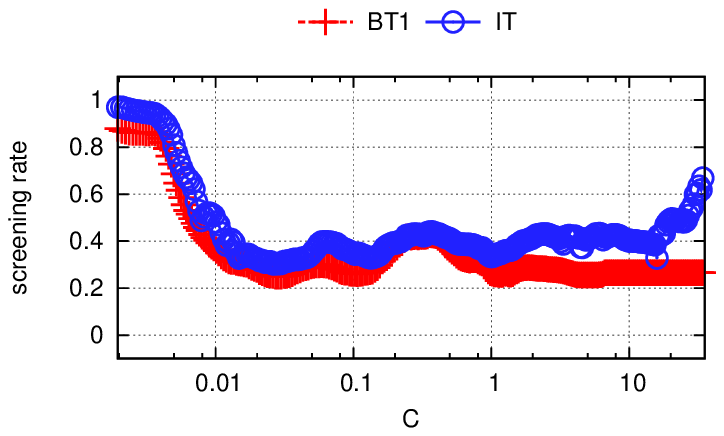} &
   \includegraphics[width = 0.225\textwidth]{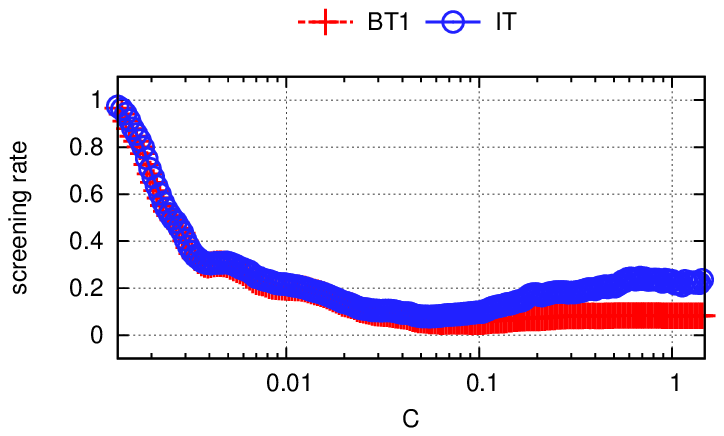} 
   \vspace*{-5mm}
   \\
   {\footnotesize D06, Linear} &
   {\footnotesize D06, RBF ($\gamma = 0.1/d$)} &      
   {\footnotesize D06, RBF ($\gamma = 1/d$)} &
   {\footnotesize D06, RBF ($\gamma = 10/d$) }
   \\
   \includegraphics[width = 0.225\textwidth]{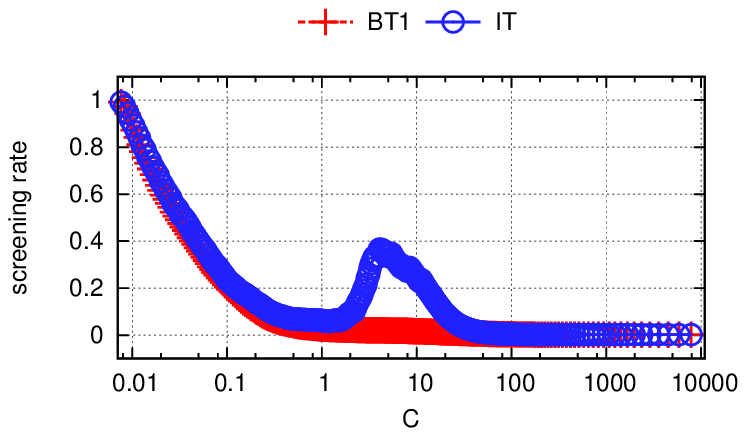} &
   \includegraphics[width = 0.225\textwidth]{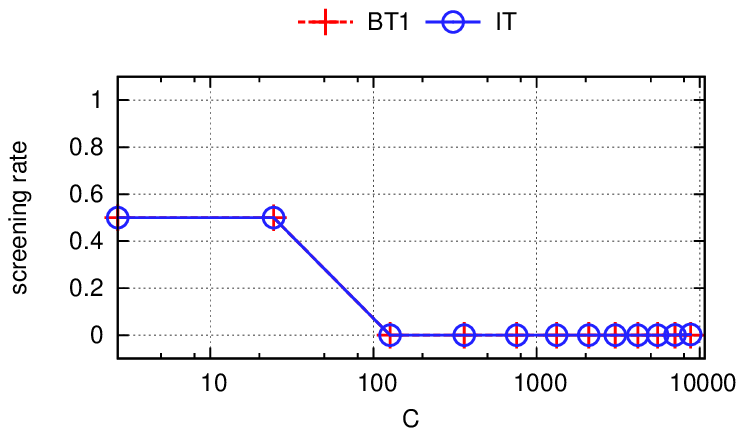} &
   \includegraphics[width = 0.225\textwidth]{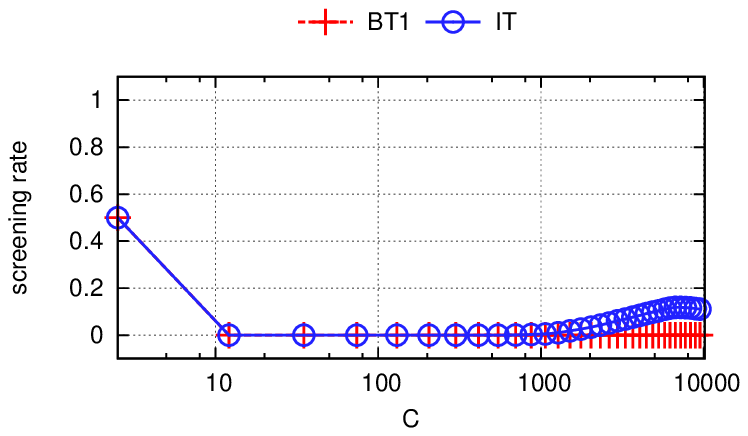} &
   \includegraphics[width = 0.225\textwidth]{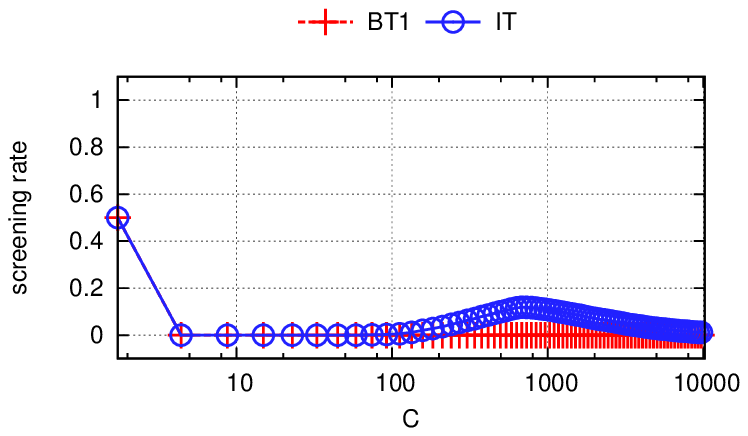} 
   \vspace*{-5mm}
   \\
   {\footnotesize D07, Linear} &
   {\footnotesize D07, RBF ($\gamma = 0.1/d$)} &      
   {\footnotesize D07, RBF ($\gamma = 1/d$)} &
   {\footnotesize D07, RBF ($\gamma = 10/d$) }
   \\
   \includegraphics[width = 0.225\textwidth]{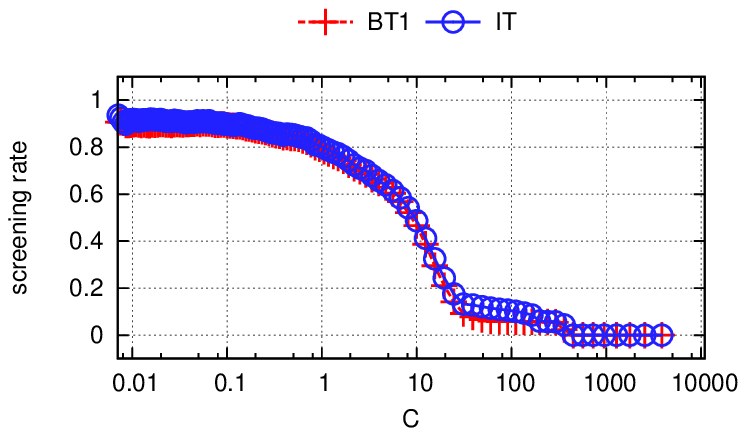} &
   \includegraphics[width = 0.225\textwidth]{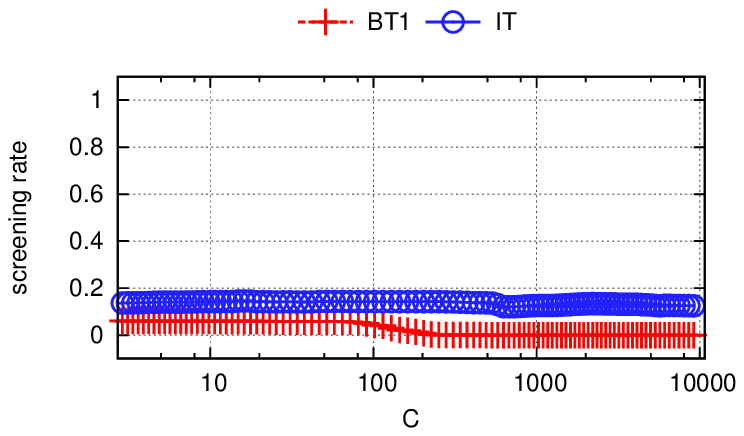} &
   \includegraphics[width = 0.225\textwidth]{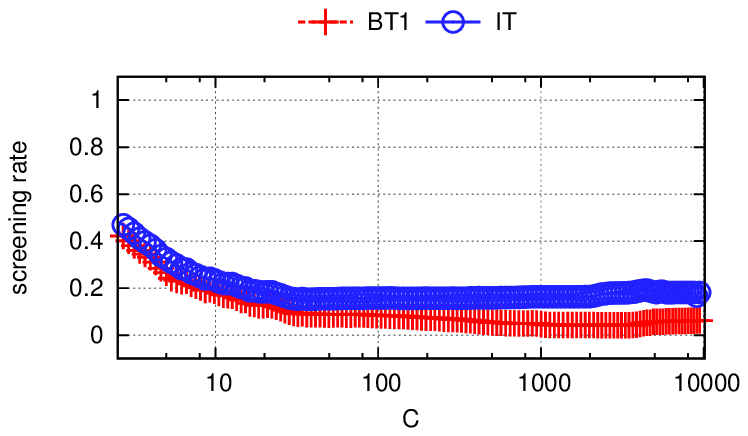} &
   \includegraphics[width = 0.225\textwidth]{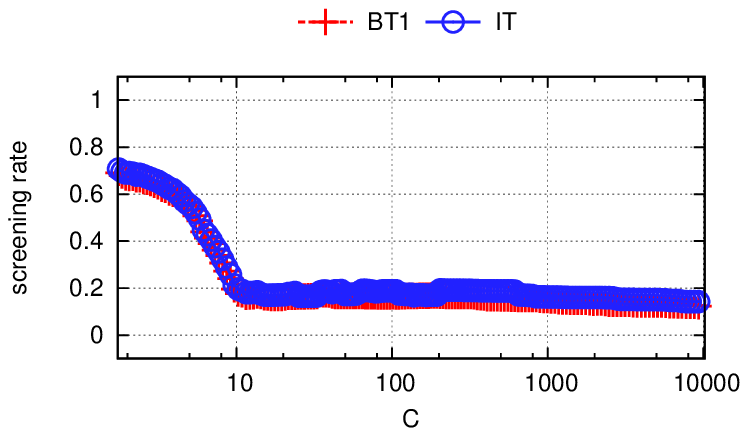} 
   \vspace*{-5mm}
   \\
   {\footnotesize D08, Linear} &
   {\footnotesize D08, RBF ($\gamma = 0.1/d$)} &      
   {\footnotesize D08, RBF ($\gamma = 1/d$)} &
   {\footnotesize D08, RBF ($\gamma = 10/d$) }
  \end{tabular}
  \caption{The screening rate in regularization path computation scenario for BT1 (red) and IT (blue).}
  \label{fig:eps.app}
 \end{center}
\end{figure}

In this scenario, 
we used the small and the medium data sets
($n \le 50,000$).
The largest regularization parameter was set as 
$C_T = 10^4$.
%
 We used linear kernel and RBF kernel
 $K(\bm x, \bm x^\prime) = \exp(-\gamma \|\bm x - \bm x^\prime\|^2)$
 with 
 $\gamma \in \{0.1/d, 1/d, 10/d\}$.
In all the six approaches,
we used the cache value and warm-start approach 
as described in
\S \ref{subsec:regularization.path}.
Table \ref{tab:cost.path}
summarizes the total computation time of the six approaches, 
and 
\figurename \ref{fig:eps.app}
shows how screening rates change with $C$ in each data set
(due to the space limitation, we only show the results on four medium data sets in \figurename \ref{fig:eps.app}).

Note first that shrinking heuristic was very helpful, and safe
sample screening alone
({\bf BT1} and {\bf IT}) 
was not as effective as shrinking. 
However, except one setup (D07, Linear), simultaneously using shrinking
and safe sample screening worked better than using shrinking alone.
As we discuss in \S \ref{subsec:complexity}, the rule evaluation cost of
BT1 is cheaper than that of IT.
Therefore, if the screening rates of these two tests are same, the
former is slightly faster than the latter.
In Table \ref{tab:cost.path},
we see that
{\bf Shrink+BT1}
was a little faster than
{\bf Shrink+IT}
in several setups. 
We conjecture that those small differences are due to the differences in the rule evaluation costs.
In the remaining setups, 
{\bf Shrink+IT}
was faster than 
{\bf Shrink+BT1}.
The differences tend to be small in the cases of linear kernel and RBF
kernel with relatively small $\gamma$.
On the other hand, significant improvements were sometimes observed
especially when RBF kernels with relatively large $\gamma$ is used.
In \figurename~\ref{fig:eps.app},
we confirmed that the screening rates of IT was never worse than BT1.

In summary,
the experimental results indicate that safe sample screening is often
helpful for reducing the computational cost of the state-of-the-art SVM
solvers.
Furthermore, 
Intersection Test seems to be the best safe sample screening method among those
we considered here.

\section{Conclusion}
\label{sec:Conclusion}
In this paper,
we introduced safe sample screening approach 
that can safely identify and screen out a subset of the 
non-SVs prior to the training phase.
We believe that our contribution would be of great practical importance
in the current \emph{big data} era 
because it enables us to reduce the data size without sacrificing the optimality.
Our approach is quite general in the sense that it can be used together with any SVM solvers
as a preprocessing step for reducing the data set size.
The experimental results indicate that 
safe sample screening is not so harmful even when it cannot screen out any instances
because the rule evaluation costs are much smaller than that of SVM solvers.
Since the screening rates highly depend on the choice of the reference solution,
an important future work is to find a better reference solution.

\ifCLASSOPTIONcompsoc
  \section*{Acknowledgments}
\else
  \section*{Acknowledgment}
\fi

We thank Kohei Hatano and Masayuki Karasuyama for their furuitful comments.
We also thank Martin Jaggi for letting us know recent studies on approximate parametric programming.
IT thanks the supports from MEXT Kakenhi 23700165 and CREST, JST.


\appendices
%
%
\section{Proofs}
\label{app:proofs}
%
%
\begin{proof}[Proof of Lemma \ref{lemm:Ball.test}]
 The lower bound 
 $\ell_{[C]i}$
 is obtained as follows:
 \begin{eqnarray*}
 &&
 \min_{\bm w}
 ~
 y_i f(\bm x_i;\bm w)
 ~{\rm s.t.}~
 \| \bm w - \bm m \|^2 \le r^2
 =
 \min_{\bm w } \max_{\mu > 0}
 ~
 \bm z_i^\top \bm w + \mu ( \|\bm w - \bm m\|^2 - r^2) 
 \\
 &=&
\max_{\mu > 0}
~
(- \mu r^2)
+ \min_{\bm w}
~
( \mu \| \bm w - \bm m \|^2 + \bm z_i^\top \bm w )
=
\max_{\mu > 0} ~
L(\mu) \triangleq -\mu r^2 - \frac{\|\bm z_i\|^2}{4\mu} + \bm z_i^\top \bm m,
 \end{eqnarray*}
 where
 the Lagrange multiplier
 $\mu > 0$
 because the ball constraint is strictly active when the bound is attained.
 By solving
 $\partial L(\mu)/\partial \mu = 0$,
 the optimal Lagrange multiplier is given as 
 $\mu = \|\bm z_i\|/2r$. 
 Substituting this into $L(\mu)$, 
 we obtain 
 \begin{eqnarray*}
  \max_{\mu \ge 0} L(\mu) = \bm z_i^\top \bm m - r \|\bm z_i\|. 
 \end{eqnarray*}
 The upper bound $u_{[C]i}$ is obtained similarly. 
\end{proof}

%
%
\begin{proof}[Proof of Lemma~\ref{lemm:structural.svm.to.ball}]
 By substituting $\xi$ in the second inequality in
 \eq{eq:tilde.theta}
 into the first inequality,
 we immediately have
 $\| \bm w - \bm m\| \le r$. 
\end{proof}

%
%
\begin{proof}[Proof of Lemma \ref{lemm:nc1}]
 From 
 Proposition 2.1.2 in
 \cite{Bertsekas99a},
 the optimal solution
 $(\bm w^*_{[C]}, \xi^*_{[C]})$
 and
 a feasible solution
 $(\tilde{\bm w}, \tilde{\xi})$
 satisfy the following relationship:
 \begin{eqnarray*}
  \nabla \cP_{[C]}(\bm w^*_{[C]}, \xi^*_{[C]})^\top
  \left(
  \mtx{c}{
  \tilde{\bm w} \\
  \tilde{\xi} \\
  }
  - 
 \mtx{c}{
 \bm w_{[C]}^* \\
 \xi_{[C]}^* \\
 }
 \right)
 = 
 \mtx{cc}{
 \bm w_{[C]}^{*\top} &
 C
 }
  \left(
  \mtx{c}{
  \tilde{\bm w} \\
  \tilde{\xi} \\
  }
  - 
 \mtx{c}{
 \bm w_{[C]}^* \\
 \xi_{[C]}^* \\
 }
 \right)
 \ge 0.
 \end{eqnarray*} 
\end{proof}

%
%
\begin{proof}[Proof of Lemma \ref{lemm:nc2}]
 From Proposition 2.1.2 in
 \cite{Bertsekas99a},
 the optimal solution
 $(\bm w^*_{[C_{\rm ref}]}, \xi^*_{[C_{\rm ref}]})$
 and
 a feasible solution
 $(\bm w^*_{[C]}, \xi^*_{[C]})$
 satisfy the following relationship:
 \begin{eqnarray*}
  \nabla \cP_{[\check{C}]}(\bm w^*_{[\check{C}]}, \xi^*_{[\check{C}]})^\top
  \left(
  \mtx{c}{
 \bm w_{[C]}^* \\
 \xi_{[C]}^* \\
  }
  - 
 \mtx{c}{
 \bm w_{[\check{C}]}^* \\
 \xi_{[\check{C}]}^* \\
 }
 \right)
 = 
 \mtx{cc}{
 \bm w_{[\check{C}]}^{*\top} &
 \check{C}
 }
  \left(
  \mtx{c}{
 \bm w_{[C]}^* \\
 \xi_{[C]}^* \\
  }
  - 
 \mtx{c}{
 \bm w_{[\check{C}]}^* \\
  \xi_{[\check{C}]}^* \\
 }
 \right)
 \ge 0.
 \end{eqnarray*}
\end{proof}

%
%
\begin{proof}[Proof of Lemma \ref{lemm:nc3}]
 \eq{eq:nc3} is necessary for the optimal solution just because it is 
 one of the $2^n$ constraints in 
 \eq{eq:str.SVM}.
\end{proof} 

%
%
\begin{proof}[Proof of Theorem \ref{theo:IT}]
First,
we prove the following lemma.

 \vspace*{2.5mm}

{\it 
 \begin{lemm}
 \label{lemm:IT.equivalence}
 Let
 $\overline{\bm w} \in \cF$
 be the optimal solution of 
 \begin{eqnarray}
  \label{eq:IT.opt.prob.1}
  \min_{\bm w} ~ \bm z_i^\top \bm w
  ~
  {\rm s.t.} ~ \| \bm w - \bm m_1 \|^2 \le r_1^2,  ~ \| \bm w - \bm m_2 \|^2 \le r_2^2,
 \end{eqnarray}
 and 
 $(\underline{\bm w}, \underline{\xi}) \in \cF \times \RR$
 be the optimal solution of 
 \begin{eqnarray}
  \label{eq:IT.opt.prob.2}
  \min_{\bm w, \xi} ~ \bm z_i^\top \bm w
  ~
  {\rm s.t.}
  ~
  \| \bm w \|^2 \le \xi,
  ~
  \xi \le 2 \bm m_1^\top \bm w + r_1^2 - \| \bm m_1 \|^2,
  ~
  \xi \le 2 \bm m_2^\top \bm w + r_2^2 - \| \bm m_2 \|^2.
 \end{eqnarray}
 Then, the two optimization problems 
 \eq{eq:IT.opt.prob.1}
 and
 \eq{eq:IT.opt.prob.2}
 are equivalent in the sense that 
 $\bm z_i^\top \overline{\bm w} = \bm z_i^\top \underline{\bm w}$.
 \end{lemm}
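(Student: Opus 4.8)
The plan is to show the two programs have the same optimal value by a direct change-of-variables argument: in \eqref{eq:IT.opt.prob.2} the variable $\xi$ acts as an auxiliary slack that, at optimality, will be squeezed to equal $\|\bm w\|^2$. First I would observe that the three constraints of \eqref{eq:IT.opt.prob.2}, read together, force $\|\bm w\|^2 \le \xi \le 2\bm m_k^\top \bm w + r_k^2 - \|\bm m_k\|^2$ for $k=1,2$; hence any feasible $(\bm w,\xi)$ of \eqref{eq:IT.opt.prob.2} has a $\bm w$ satisfying $\|\bm w\|^2 - 2\bm m_k^\top \bm w + \|\bm m_k\|^2 \le r_k^2$, i.e. $\|\bm w - \bm m_k\|^2 \le r_k^2$, so $\bm w$ is feasible for \eqref{eq:IT.opt.prob.1}. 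This gives $\bm z_i^\top \underline{\bm w} \ge \bm z_i^\top \overline{\bm w}$ (the minimum over the larger feasible set is no larger).

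For the reverse inequality I would take the optimizer $\overline{\bm w}$ of \eqref{eq:IT.opt.prob.1} and set $\xi := \|\overline{\bm w}\|^2$. Since $\overline{\bm w}$ satisfies $\|\overline{\bm w}-\bm m_k\|^2 \le r_k^2$, expanding the square gives $\|\overline{\bm w}\|^2 \le 2\bm m_k^\top \overline{\bm w} + r_k^2 - \|\bm m_k\|^2$ for $k=1,2$, so $(\overline{\bm w},\|\overline{\bm w}\|^2)$ is feasible for \eqref{eq:IT.opt.prob.2}. Therefore $\bm z_i^\top \underline{\bm w} \le \bm z_i^\top \overline{\bm w}$, and combined with the previous paragraph this yields $\bm z_i^\top \underline{\bm w} = \bm z_i^\top \overline{\bm w}$, proving Lemma~\ref{lemm:IT.equivalence}.

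I do not anticipate a genuine obstacle here — the lemma is essentially a reformulation trick, recasting the intersection of two balls (a region with a quadratic description) as a region cut out by one convex quadratic constraint $\|\bm w\|^2 \le \xi$ and two \emph{linear} constraints in $(\bm w,\xi)$, which is precisely the form \eqref{eq:tilde.theta} / the expanded-solution-space picture used earlier. The only point requiring a little care is to confirm that the epigraph-type constraint $\|\bm w\|^2\le\xi$ is tight at the optimum in the direction that matters, but since decreasing $\xi$ never violates the two linear upper bounds on $\xi$, we may always lower $\xi$ to $\|\bm w\|^2$ without affecting feasibility or the objective (which does not involve $\xi$); hence the substitution above is legitimate and no separate tightness argument is needed. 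After this lemma, the remainder of the proof of Theorem~\ref{theo:IT} proceeds by solving \eqref{eq:IT.opt.prob.2} (equivalently, the original geometric problem of minimizing a linear functional over the intersection of two balls) in closed form, splitting into the three cases according to whether the optimum lies on the boundary of $\Theta^{\mathrm{(BT1)}}_{[C]}$ alone, on $\Theta^{\mathrm{(BT2)}}_{[C]}$ alone, or on the $(n-2)$-sphere where the two boundary spheres intersect; the case conditions in \eqref{eq:it.test.low}–\eqref{eq:it.upp} are exactly the angular thresholds separating these three regimes, with $\bm\psi$ the center and $\kappa$ the radius of the intersection sphere and $\bm\phi$ the axis joining the two ball centers.
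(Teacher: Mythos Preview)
Your argument is correct and is essentially identical to the paper's own proof: both directions are obtained by the same feasibility correspondence, namely that any $(\bm w,\xi)$ feasible for \eqref{eq:IT.opt.prob.2} yields a $\bm w$ feasible for \eqref{eq:IT.opt.prob.1}, and conversely $(\overline{\bm w},\|\overline{\bm w}\|^2)$ is feasible for \eqref{eq:IT.opt.prob.2}. Your additional remark about lowering $\xi$ to $\|\bm w\|^2$ is harmless but unnecessary, since the two one-line feasibility checks already suffice for equality of optimal values.
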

}
 \begin{proof}
 Let
 $\overline{\xi} \triangleq \| \overline{\bm w} \|^2$.
 Then,
 $(\overline{\bm w}, \overline{\xi})$
 is a feasible solution of 
 \eq{eq:IT.opt.prob.2}
 because
 \begin{eqnarray*}
  \| \overline{\bm w} - \bm m_1 \|^2 \le r_1^2 
  ~\Rightarrow~
  2 \bm m_1^\top \overline{\bm w} + r_1^2 - \| \bm m_1 \|^2 \ge \| \overline{\bm w}\|^2 = \overline{\xi},
  \\
  \| \overline{\bm w} - \bm m_2 \|^2 \le r_2^2 
  ~\Rightarrow~
  2 \bm m_2^\top \overline{\bm w} + r_2^2 - \| \bm m_2 \|^2 \ge \| \overline{\bm w}\|^2 = \overline{\xi}.
 \end{eqnarray*}
 On the other hand,
 $(\underline{\bm w}, \underline{\xi})$
 is a feasible solution of 
 \eq{eq:IT.opt.prob.1}
 because
 \begin{eqnarray*}
  \| \underline{\bm w} \|^2 \le \underline{\xi}
  \text{ and }
  \underline{\xi} \le 2 \bm m_1^\top \underline{\bm w} + r_1^2 - \| \bm m_1 \|^2
  ~\Rightarrow~  
  \| \underline{\bm w} - \bm m_1 \|^2 \le r_1^2,
  \\
  \| \underline{\bm w} \|^2 \le \underline{\xi}
  \text{ and }
  \underline{\xi} \le 2 \bm m_2^\top \underline{\bm w} + r_2^2 - \| \bm m_2 \|^2
  ~\Rightarrow~  
  \| \underline{\bm w} - \bm m_2 \|^2 \le r_2^2.
 \end{eqnarray*}
 These facts indicate that 
 $\bm z_i^\top \overline{\bm w} = \bm z_i^\top \underline{\bm w}$
 for arbitrary
 $\bm z_i \in \cF$. 
 \end{proof}

 \vspace*{2.5mm}
 
 %
 We first note that at least one of the two balls $\Theta^{\rm (BT1)}_{[C]i}$ and $\Theta^{\rm (BT2)}_{[C]i}$ 
 are strictly active when the lower bound is attained.
 It means that we can only consider the following three cases:
 \begin{itemize}
  \item Case 1) $\Theta^{\rm (BT1)}_{[C]i}$ is active and $\Theta^{\rm (BT2)}_{[C]i}$ is inactive, 
  \item Case 2) $\Theta^{\rm (BT2)}_{[C]i}$ is active and $\Theta^{\rm (BT1)}_{[C]i}$ is inactive, and
  \item Case 3) Both $\Theta^{\rm (BT1)}_{[C]i}$ and $\Theta^{\rm (BT2)}_{[C]i}$ are active. 
 \end{itemize}

 From Lemma \ref{lemm:IT.equivalence},
 the lower bound
 $\ell^{\rm (IT)}_{[C]i}$
 is the solution of 
 \begin{eqnarray}
  \label{eq:IT.proof.equivalent.prob}
  \min_{\bm w, \xi}
  ~
  \bm z_i^\top \bm w
  ~
  {\rm s.t.}
  ~
  \| \bm w \|^2 \le \xi,
  ~
  \xi \le 2 \bm m_1^\top \bm w + r_1^2 - \| \bm m_1 \|^2,
  ~
  \xi \le 2 \bm m_2^\top \bm w + r_2^2 - \| \bm m_2 \|^2.
 \end{eqnarray}
 Introducing the Lagrange multipliers 
 $\mu, \nu_1, \nu_2 \in \RR_{+}$
 for the three constraints in 
 \eq{eq:IT.proof.equivalent.prob},
 we write the Lagrangian of the problem 
 \eq{eq:IT.proof.equivalent.prob}
 as
 $L(\bm w, \xi, \mu, \nu_1, \nu_2)$.
 From the stationary conditions,
 we have
 \begin{eqnarray}
  \label{eq:IT.proof.stationary.cond}
  \pd{L}{\bm w} = 0
  ~\Leftrightarrow~
  \bm w = \frac{1}{2 \mu} (2 \nu_1 \bm m_1 + 2 \nu_2 \bm m_2 - \bm z_i),
  ~
  \pd{L}{\xi} = 0
  ~\Leftrightarrow~
  \mu - \nu_1 - \nu_2 = 0.
 \end{eqnarray}
 where
 $\mu > 0$
 because at least one of the two balls $\Theta^{\rm (BT1)}_{[C]i}$ and $\Theta^{\rm (BT2)}_{[C]i}$ are strictly active. 
 
 {\bf Case 1})
 Let us first consider the case where $\Theta^{\rm (BT1)}_{[C]i}$ is active and $\Theta^{\rm (BT2)}_{[C]i}$ is inactive,
 i.e., 
 $\| \bm w - \bm m_1 \|^2 = r_1^2$
 and 
 $\| \bm w - \bm m_2 \|^2 < r_2^2$.
 Noting that 
 $\nu_2 = 0$,
 the latter can be rewritten as 
 \begin{eqnarray*}
  \| \bm w - \bm m_2 \|^2 < r_2^2
  ~\Leftrightarrow~
  \frac{- \bm z_i^\top \bm \phi}{\|\bm z_i\| \|\bm \phi\|} < \frac{\zeta - \|\bm \phi\|}{r_1},
 \end{eqnarray*}
 where we have used the stationary condition in 
 \eq{eq:IT.proof.stationary.cond}.
 In this case, 
 it is clear that the lower bound is identical with that of BT1, i.e., 
 $\ell^{\rm {(IT)}}_{[C]i} = \ell^{\rm {(BT1)}}_{[C]i}$. 

 {\bf Case 2})
 Next, let us consider the case where $\Theta^{\rm (BT2)}_{[C]i}$ is active and $\Theta^{\rm (BT1)}_{[C]i}$ is inactive,
 i.e., 
 $\| \bm w - \bm m_2 \|^2 = r_2^2$
 and 
 $\| \bm w - \bm m_1 \|^2 < r_1^2$.
 In the same way as Case 1),
 the latter condition is rewritten as 
 \begin{eqnarray*}
  \| \bm w - \bm m_1 \|^2 < r_1^2
  ~\Leftrightarrow~
  \frac{\zeta}{r_2} < \frac{- \bm z_i^\top \bm \phi}{\|\bm z_i\| \|\bm \phi\|}.
 \end{eqnarray*}
 In this case, 
 the lower bound of IT is identical with that of BT2, i.e., 
 $\ell^{\rm {(IT)}}_{[C]i} = \ell^{\rm {(BT2)}}_{[C]i}$. 

 {\bf Case 3})
 Finally, let us consider the remaining case where both of the two balls $\Theta^{\rm (BT1)}_{[C]i}$ and $\Theta^{\rm (BT2)}_{[C]i}$ are strictly active. 
 From the conditions of Case 1) and Case 2),
 the condition of Case 3) is written as 
\begin{eqnarray}
 \label{eq:proof6.case3.cond}
 \frac{\zeta - \|\bm \phi\|}{r_1}
 \le
 \frac{- \bm z_i^\top \bm \phi}{\|\bm z_i\| \|\bm \phi\|}
 \le
 \frac{\zeta}{r_2}.
\end{eqnarray}
 After plugging the stationary conditions 
 \eq{eq:IT.proof.stationary.cond} 
 into
 $L(\bm w, \xi, \mu, \nu_1, \nu_2)$, 
 the solution of the following linear system of equations
\begin{eqnarray*}
 \pd{L}{\mu} = 0, ~
 \pd{L}{\nu_1} = 0, ~
 \pd{L}{\nu_2} = 0, 
\end{eqnarray*} 
 are given as 
\begin{eqnarray}
 \label{eq:IT-proof.mu.nu1.nu2}
 \mu
 =
 \frac{1}{2\kappa}\sqrt{\|\bm z_i\|^2 - \frac{(\bm z_i^\top\bm \phi)^2}{\|\bm \phi\|^2}},
 ~
 \nu_1
 = 
 \mu \frac{\zeta}{\|\bm \phi\|} + \frac{\bm z_i^\top \bm \phi}{2\|\bm \phi\|^2},
 ~
 \nu_2
 = 
 \mu - \nu_1.
\end{eqnarray}
From
\eq{eq:proof6.case3.cond}, 
$\mu, \nu_1, \nu_2$
in 
\eq{eq:IT-proof.mu.nu1.nu2}
are shown to be non-negative,
meaning that 
\eq{eq:IT-proof.mu.nu1.nu2}
are the optimal Lagrange multipliers.
 By plugging these 
 $\mu, \nu_1, \nu_2$
 into
 $\bm w$ in 
 \eq{eq:IT.proof.stationary.cond},
the lower bound is obtained as
\begin{eqnarray*}
 \ell_i^{\rm (IT)}
   = 
   \bm z_i^\top \bm \psi - \kappa \sqrt{\| \bm z_i \|^2 - \frac{(\bm z_i^\top \bm \phi)^2}{\| \bm \phi \|^2}}.
 \end{eqnarray*}

 By combining all the three cases above, 
 the lower bound 
 \eq{eq:it.test.low}
 is asserted. 
 The upper bound 
 \eq{eq:it.upp}
 can be similarly derived.
\end{proof}

%
\begin{proof}[Proof of Lemma \ref{lemm:C.min}]
 It is suffice to show that
 $\bm \alpha = C \ones$
 satisfies the optimality condition for any 
 $C \in (0,C_{min}]$.
 Remembering that
 $f(\bm x_i) = \sum_{j \in \NN_n} \alpha_j y_j K(\bm x_i, \bm x_j) = C (\bm Q \ones)_i$, 
 we have 
\begin{eqnarray*}
\max_{i \in \NN_n} y_i f(\bm x_i) 
= 
\max_{i \in \NN_n} C (\bm Q \ones)_i
\le
C_{\rm min} \max_{i \in \NN_n} (\bm Q \ones)_i
= 
1. 
\end{eqnarray*}
Noting that 
positive semi-definiteness of the matrix 
$\bm Q$
indicates  
$\ones^\top \bm Q \ones \ge 0$,
the above inequality holds 
because at least one component of 
$\bm Q \ones$ 
must have nonnegative value.
It implies that all the $n$ samples are in
either 
$\cE$ or $\cL$,
where
$\alpha_i = C~\forall i \in \NN_n$
clearly satisfies the optimality.
\end{proof}

%
%
\section{A comparison with the method in \cite{Ogawa13a}}
\label{app:v.s.ICML.ver.}

\begin{figure}[t]
\begin{center}
\begin{tabular}{cc}
\includegraphics[width=0.45\textwidth]{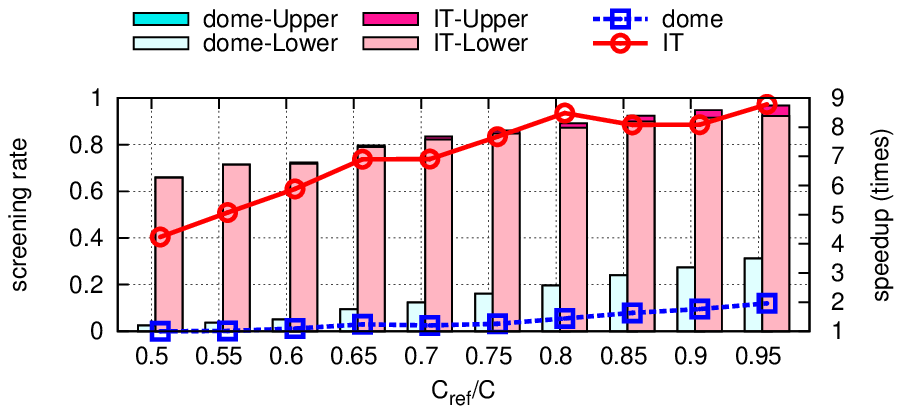} &
\includegraphics[width=0.45\textwidth]{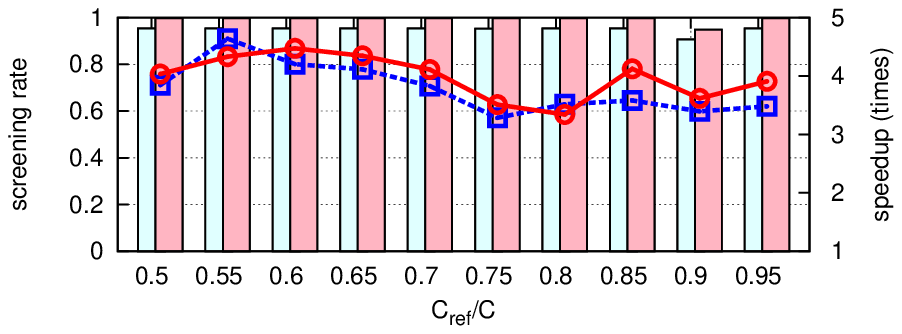}
\\
\small{{\bf B.C.D.} ($n=569,d=30$)} &
\small{{\bf PCMAC} ($n=1946,d=7511$)}
\\
\includegraphics[width=0.45\textwidth]{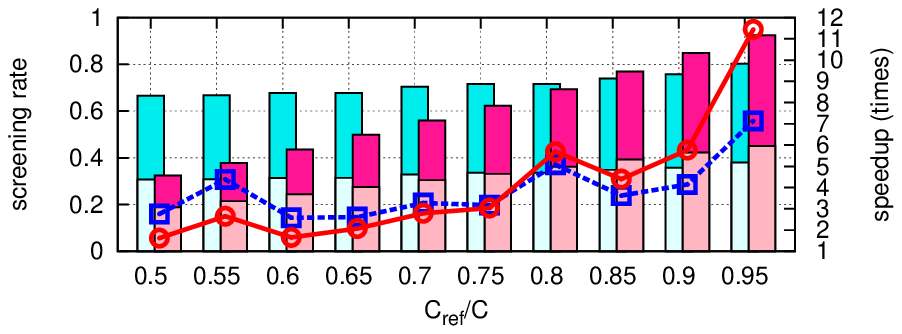} &
\includegraphics[width=0.45\textwidth]{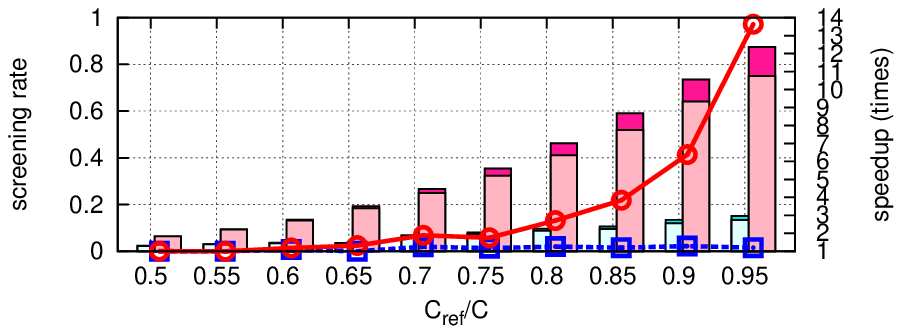}
\\
\small{{\bf MAGIC.} ($n=19020,d=10$)} &
\small{{\bf IJCNN1} ($n=19990,d=22$)}
\end{tabular}
 \caption{
The comparison between
Intersection Test
and 
Dome Test
\cite{Ogawa13a}.
The red and blue bars (the left vertical axis)
indicate
the screening rates, i.e., 
the number of screened samples in 
$\cR$ and $\cL$
out of the total size
$|\cR| + |\cL|$. 
The red and blue lines (the right vertical axis)
show
the speedup improvement, 
where the baseline is naive full-sample training without any screening.}
\label{fig:v.s.ICML.ver.}
\end{center}
\end{figure}

We briefly describe the safe sample screening method
proposed in our preliminary conference paper
\cite{Ogawa13a},
which we call,
\emph{Dome Test (DT)}\footnote{
We call it as
\emph{Dome Test}
because the shape of the region
$\Theta$ 
looks like a dome
(see \cite{Ogawa13a} for details).}.
We discuss the difference among DT and IT, 
and 
compare their screening rates and computation times in simple numerical experiments. 
DT is summarized in the following theorem:

\begin{it}
 \begin{theo}[Dome Test]
\label{theo:dome.test}
 %
 Consider two positive scalars
 $C_a < C_b$. 
 Then,
 for any
 $C \in [C_a, C_b]$, 
 the lower and the upper bounds of
 $y_i f(\bm x_i; \bm w^*_{[C]})$
 are given by 
 \begin{align*}
 \ell_{[C]i}^{\rm (DT)}
 \triangleq
 \min_{\bm w \in \Theta} y_i f(\bm x_i; \bm w) 
 = 
 \mycase{
-\sqrt{2 \gamma_b} \|\bm z_i\| &
 \text{if}~
{\scriptstyle\frac{-\bm z_i^\top\bm w_{[C_a]}^*}{\| \bm z_i \|}} \ge 
{\scriptstyle\frac{\gamma_a\sqrt{2}}{\sqrt{\gamma_b}}}
  \\
\bm z_i^\top\bm w_{[C_a]}^* 
- \sqrt{{\scriptstyle\frac{\gamma_b - \gamma_a}{\gamma_a}}(\gamma_a\|\bm z_i\|^2 - (\bm z_i^\top \bm w_{[C_a]}^*)^2 }
  &
  \text{otherwise}.
}
 \end{align*}
  and 
 \begin{align*}
&
u_{[C]i}^{\rm (DT)} \triangleq \max_{\bm w \in \Theta} y_i f(\bm x_i; \bm w) 
=
\mycase{
\sqrt{2 \gamma_b} \|\bm z_i\|
&
\text{if}~
{\scriptstyle\frac{\bm z_i^\top\bm w_{[C_a]}^*}{\| \bm z_i \|}} \ge 
{\scriptstyle\frac{\gamma_a\sqrt{2}}{\sqrt{\gamma_b}}}
\\
\bm z_i^\top\bm w_{[C_a]}^* 
+ \sqrt{{\scriptstyle\frac{\gamma_b - \gamma_a}{\gamma_a}}(\gamma_a\|\bm z_i\|^2 - (\bm z_i^\top \bm w_{[C_a]}^*)^2 }
&
\text{otherwise},
}
 \end{align*}
where 
$\gamma_a \triangleq \|\bm w_{[C_a]}^*\|^2$
and
$\gamma_b = \|\bm w_{[C_b]}^*\|^2$.
 \end{theo}
 \end{it}

\noindent
See 
\cite{Ogawa13a}
for the proof. 
A limitation of DT is that we need to know a feasible solution with a larger $C_b > C$
as well as the optimal solution with a smaller $C_a < C$
(remember that we only need the latter for BT1, BT2 and IT). 
As discussed in
\S 
\ref{subsec:regularization.path},
we usually train an SVM regularization path
from smaller $C$ to larger $C$ by using warm-start approach. 
Therefore,
it is sometimes computationally expensive to obtain a feasible solution with a larger $C_b > C$. 
In 
\cite{Ogawa13a},
we have used a bit tricky algorithm for obtaining such a feasible solution.

\figurename \ref{fig:v.s.ICML.ver.} 
shows the results of empirical comparison among DT and IT
on the four data sets used in
\cite{Ogawa13a}
with linear kernel
(CVX
\cite{cvx12a}
is used as the SVM solver in order to simply compare the effects of the screening performances).
Here,
we fixed
$C_{\rm ref} = C_a = 10^4 C_{\rm min}$
and varied
$C$
in the range of
$[0.5C_{\rm ref}, 0.95C_{\rm ref}]$.
For DT,
we assumed that the optimal solution with
$C_b = 1.3C$
can be used as a feasible solution
although it is a bit unfair setup for IT. 
We see that,
IT is clearly better in
\emph{B.C.D.}
and
\emph{IJCNN1},
comparable in
\emph{PCMAC}
and
slightly worse in
\emph{MAGIC}
data sets 
albeit a bit unfair setup for IT. 
The reason why
DT behaved poorly even when $C_{\rm ref}/C$ is close to 1
is that the lower and the upper bounds in DT
depends on the value
$(\gamma_b - \gamma_a)/\gamma_a$, 
and does not depend on $C$ itself. 
It means that,
when the range 
$[C_a,C_b]$ 
is somewhat large, 
the performance of DT deteriorate.

%
%
\section{Equivalence between a special case of BT1 and the method in Wang et al. \cite{Wang13e}}
\label{app:equivalence.DVI}
When we use the reference solution
$\bm w^*_{[C_{\rm ref}]}$
as both of the feasible solution and the (different) optimal solution, 
the lower bound by BT1 is written as 
\begin{eqnarray*}
\ell^{\rm (BT1)}_{[C]i}
 =
 \frac{C + C_{\rm ref}}{2 C_{\rm ref}} \bm z_i^\top \bm w^*_{[C_{\rm ref}]}
 -
 \frac{C - C_{\rm ref}}{2 C_{\rm ref}} \| \bm w^*_{[C_{\rm ref}]} \| \| \bm z_i \|
\end{eqnarray*}
Using the relationships described in
\S \ref{subsec:kernelization},
the dual form of the lower bound is written as 
\begin{eqnarray}
 \label{eq:DVI.equiv.2}
\ell^{\rm (BT1)}_{[C]i}
 =
 \frac{C + C_{\rm ref}}{2 C_{\rm ref}} (\bm Q \bm \alpha^*_{[C_{\rm ref}]})_i
 -
 \frac{C - C_{\rm ref}}{2 C_{\rm ref}}
 \sqrt{\bm \alpha^{*\top}_{[C_{\rm ref}]} \bm Q \bm \alpha^*_{[C_{\rm ref}]} Q_{ii}}.
\end{eqnarray}
After transforming some variables, 
\eq{eq:DVI.equiv.2}
is easily shown to be equivalent to the first equation in 
Corollary 11
in 
\cite{Wang13e}.
Note that we derive BT1 in the primal solution space,
while Wang et al. \cite{Wang13e}
derived the identical test in the dual space.

%
%
\section{$\eps$-approximation Path Procedure}
\label{app:eps-path}
The
$\eps$-path algorithm enables us to compute an SVM regularization path
such that 
the relative approximation error between two consecutive solutions are bounded
by a small constant
$\eps$
(we set $\eps = 10^{-3}$). 
Precisely speaking,
the sequence of the regularization parameters 
$\{C_t\}_{t \in \NN_T}$
produced by
the $\eps$-path algorithm
has a property that, 
for any 
$C_{t - 1}$
and
$C_{t}$, 
$t \in \{2, \ldots, T\}$,
the former dual optimal solution 
$\bm \alpha^*_{C_{t - 1}}$
satisfies 
\begin{eqnarray}
\label{eq:relative.epsilon.approximation}
\frac{
 | \cD(\bm \alpha_{[C]}^*)
 - \cD( {\scriptstyle\frac{C}{C_{t-1}}} \bm \alpha_{[C_{t - 1}]}^*)|
 }{
 \cD(\bm \alpha_{[C]}^*)
 } \le \eps
 ~
 \forall
 ~
 C \in [C_{t - 1}, C_t],
\end{eqnarray}
where
$\cD$
is the dual objective function defined in
\eq{eq:d.SVM.prob.}.
This property roughly implies that, 
the optimal solution 
$\bm \alpha^*_{[C_{t - 1}]}$
is a reasonably good approximate solutions
within the range of 
$C \in [C_{t - 1}, C_{t}]$.

Algorithm \ref{alg:eps.app.path}
describes the regularization path computation procedure 
with
the safe sample screening and the $\eps$-path algorithms.
Given
$\bm w^*_{[C_{t - 1}]}$, 
the $\eps$-path algorithm finds the largest $C_t$
such that any solutions
between
$[C_{t - 1}, C_t]$
can be approximated by 
the current solution
in the sense of 
\eq{eq:relative.epsilon.approximation}. 
Then,
the safe sample screening rules for 
$\bm w^*_{[C_t]}$
are constructed by using
$\bm w^*_{[C_{t - 1}]}$
as the reference solution.
After screening out a subset of the samples, 
an SVM solver
({\libsvm} and {\liblinear} in our experiments)
is applied to the reduced set of the samples
to obtain 
$\bm w^*_{[C_t]}$.

\begin{algorithm}[h]
\caption{SVM regularization path computation
with the safe sample screening and the $\eps$-path algorithms} 
\label{alg:eps.app.path}
\begin{algorithmic}[1]
\Require{Training set $\{(\bm x_i,y_i)\}_{i\in\bN_n}$}, the largest regularization parameter $C_T$. 
\Ensure{Regularization path $\{\bm w_{[C_t]}^*\}_{t \in \NN_T}$}.
\State Compute $C_{\rm min}$.
\State $t \leftarrow 1,~C_t \leftarrow C_{\rm min},~\bm \alpha_{[C_t]}^* \leftarrow C_{t} \ones$. 
\While {$\cL \neq \emptyset$ and $C_t < C_T$}

 \State $t \leftarrow t+1$.

 \State Compute the next $C_t$ by the $\eps$-path algorithm. 

 \State Construct the safe rules for $C_t$ by using $\bm w^*_{[C_{t - 1}]}$.

 \State Screen out a subset of the samples by those rules.
 
 \State Compute 
 $\bm w^*_{[C_t]}$
 by an SVM solver.

 \EndWhile
\end{algorithmic}
\end{algorithm}

\ifCLASSOPTIONcaptionsoff
  \newpage
\fi

\bibliographystyle{IEEEtran}
\bibliography{eref}

\end{document}